\newtheorem{proposition}{Proposition}
\newcommand{\tabincell}[2]{\begin{tabular}{@{}#1@{}}#2\end{tabular}}
\begin{document}

\begin{frontmatter}



\title{Improving Generalization of Neural Vehicle Routing Problem Solvers Through the Lens of Model Architecture} 
\author[1]{Yubin~Xiao}
\ead{xiaoyb21@mails.jlu.edu.cn}
\author[2,3]{Di~Wang}
\ead{wangdi@ntu.edu.sg}
\author[1]{Xuan~Wu\corref{cor1}}
\ead{wuuu22@mails.jlu.edu.cn}
\author[1]{Yuesong~Wu}
\ead{wuys23@mails.jlu.edu.cn}
\author[4]{Boyang~Li}
\ead{boyang.li@ntu.edu.sg}
\author[1]{Wei~Du}
\ead{weidu@jlu.edu.cn}
\author[1]{Liupu~Wang}
\ead{wanglpu@jlu.edu.cn}
\author[1]{You Zhou\corref{cor1}}
\ead{zyou@jlu.edu.cn}
\cortext[cor1]{Corresponding authors}

\affiliation[1]{organization={Key Laboratory of Symbolic Computation and Knowledge Engineering of Ministry of Education, College of Computer Science and Technology, Jilin University},
            city={Changchun},
            postcode={130012},
            country={China}}

\affiliation[2]{organization={Joint NTU-UBC Research Centre of Excellence in Active Living for the Elderly, Nanyang Technological University},
            postcode={639956},
            country={Singapore}}
\affiliation[3]{organization={WeBank-NTU Joint Research Institute on Fintech, Nanyang Technological University},
            postcode={639956},
            country={Singapore}}
\affiliation[4]{organization={College of Computing and Data Science, Nanyang Technological University},
            postcode={639956},
            country={Singapore}}




\begin{abstract}
Neural models produce promising results when solving Vehicle Routing Problems (VRPs), but may often fall short in generalization. Recent attempts to enhance model generalization often incur unnecessarily large training cost or cannot be directly applied to other models solving different VRP variants. To address these issues, we take a novel perspective on model architecture in this study. Specifically, we propose a plug-and-play Entropy-based Scaling Factor~(ESF) and a Distribution-Specific~(DS) decoder to enhance the size and distribution generalization, respectively. ESF adjusts the attention weight pattern of the model towards familiar ones discovered during training when solving VRPs of varying sizes. The DS decoder explicitly models VRPs of multiple training distribution patterns through multiple auxiliary light decoders, expanding the model representation space to encompass a broader range of distributional scenarios. We conduct extensive experiments on both synthetic and widely recognized real-world benchmarking datasets and compare the performance with seven baseline models. The results demonstrate the effectiveness of using ESF and DS decoder to obtain a more generalizable model and showcase their applicability to solve different VRP variants, i.e., travelling salesman problem and capacitated VRP. Notably, our proposed generic components require minimal computational resources, and can be effortlessly integrated into conventional generalization strategies to further elevate model generalization.
\end{abstract}



\begin{keyword}
Vehicle routing problem\sep generalization\sep model architecture\sep neural combinatorial optimization



\end{keyword}

\end{frontmatter}



\section{Introduction}
The Vehicle Routing Problem (VRP) is a well-known Combinatorial Optimization (CO) problem with diverse real-world applications, e.g., transportation and logistics \cite{Kim2015}. Extensive research has been conducted on VRP over the years, led to the development of various solvers employing exact, approximate and heuristic algorithms \cite{Applegate2007, Helsgaun2017}. Nonetheless, existing solvers often suffer from the scalability issue \cite{Zhou2023} and require substantial manual rules and domain expertise \cite{Li2021}. As a promising alternative, Neural Network~(NN)-based models have been increasingly employed for solving VRPs in recent years, albeit with elusive theoretical guarantees. These models typically leverage learned NNs to acquire heuristics for constructing solutions or enhancing the quality of initial solutions through supervised learning or Reinforcement Learning (RL). By exploiting the underlying patterns of VRP training instances, neural VRP solvers achieve competitive or even superior solution quality compared to the conventional VRP solvers, accompanied by a noteworthy reduction in inference time. This work primarily focuses on the construction-based neural VRP solvers due to their faster inference speed \cite{Kool2019}. 

However, neural VRP solvers are typically trained and tested on instances with the same size and distribution, limiting their ability to generalize in unseen (especially in real world) scenarios that may deviate from the training set. For instance, the well-established POMO model \cite{Kwon2020} demonstrates excellent performance when trained and tested on VRP instances with a fixed size of nodes, where the node coordinates of training and test instances are both sampled from a uniform distribution. Nonetheless, the performance of the pretrained POMO model tends to decline when applied to unseen scenarios of different sizes (see Table~\ref{tab_log1}) and distribution patterns (see Table~\ref{tab_ds}). This drawback of low-level generalization poses a significant obstacle towards deploying these models, because real-world VRPs often involve tasks with varying sizes and unknown distribution. Consequently, enhancing the generalization of neural VRP solvers presents a practical yet challenging issue.

Several preliminary attempts have been made to enhance the generalization capability of neural VRP solvers. These attempts can be broadly categorized into the following two groups: size generalization and distribution generalization. Despite having promising outcomes, these approaches generally encounter the following three limitations: 1)~Size generalization methods predominantly focus on up-scaling, often neglecting the equally pertinent down-scaling generalization. This oversight is significant because practical VRPs may exhibit arbitrary sizes, necessitating comprehensive size generalization methods. 2)~Distribution generalization methods typically integrate various generalization algorithms, such as knowledge distillation \cite{Bi2022}, into the framework of model learning algorithms (e.g., RL). This integration amplifies the already considerable cost for model training. Furthermore, these methods may encounter three inherent training challenges, regardless of their specific training methodologies (see details in Section~\ref{dis_section}), often leading to under-performance. 3)~Both types of methods are often intricately designed, e.g., with numerous hyperparameters to be pre-set in the meta-learning process \cite{Zhou2023} and extended diffusion models specifically designed  for solving Travelling Salesman Problems (TSPs, a VRP variant) \cite{Sun2023}. The intricacy of these methods may constrain their applicability to a specific model \cite{Pan2023} or another VRP variant \cite{Sun2023}, impeding the potential adoption of diverse models to achieve further performance gain or solve other VRP variants.

To address the aforementioned limitations of the conventional generalization methods designed for neural VRP solvers, this study explores model generalization from a novel perspective---the model architecture. We aim to enhance model generalization by imposing lightweight model architecture improvement methods. This type of methods differs from existing generalization methods by excluding elaborately designed modules and training algorithms, thus making it potentially applicable to various models or VRP variants. Moreover, due to the unique perspective on enhancing model generalization, this type of methods can be integrated into existing generalization methods to achieve further performance elevation. Specifically, we propose two generic components based on the lens of model architecture to enhance generalization across varying sizes and distribution, respectively. For size generalization, we introduce an Entropy-based Scaling Factor (ESF) into the model's attention module, dynamically adjusting the attention weight pattern to align with patterns discovered during training. ESF accommodates both up-scaling and down-scaling generalization, and its plug-and-play nature allows effortless application during both testing and training. For distribution generalization, we leverage the nature of heavy encoder and light decoder architecture commonly used in neural VRP solvers \cite{Kwon2020}, and propose a Distribution-Specific~(DS) decoder-based method to explicitly model VRPs of different distribution patterns by using multiple light decoders. Compared to the conventional methods, our DS decoder-based one relies solely on the model learning algorithm, with no requirement of additional computation. Although both our proposed components have a straightforward design, to our great surprise and best of knowledge, such generic components have not been proposed to enhance the generalization of neural VRP solvers in the literature. 

Extensive experimental results, involving thirty eight synthetic and two real-world benchmarking (i.e., TSPLIB \cite{Reinelt1991} and CVRPLIB \cite{UCHOA2017}) datasets and seven baseline models \cite{Kool2019, Kwon2020, Xin2021, Bi2022, Zhou2023, Luo2023, Drakulic2023}, demonstrate the effectiveness of the proposed two components in enhancing model generalization. Additionally, by integrating the proposed components, existing generalization methods \cite{Bi2022, Zhou2023, Luo2023, Drakulic2023} for neural VRP solvers exhibit further improved generalization performance. More importantly, both components offer easy implementation and demand minimal computing resources (see Section~\ref{method}), which holds promising significance towards deployment in real-world applications and adoption by the neural CO community. 

The key contributions of this work are as follows.

{\romannumeral 1}) We propose a generic plug-and-play scaling factor applicable to neural VRP solvers in both testing and training phases, enhancing the size generalization performance with minimal computational overhead.

{\romannumeral 2}) We devise a generic distribution-specific decoder-based method to employ multiple light decoders for representation of different distribution patterns, thereby enhancing the model's ability to generalize across varying distribution.

{\romannumeral 3}) By adopting the proposed two components, we enhance the generalization of neural VRP solvers to avoid the problems of  both the intricacy of implementation and the substantial computational resources required by conventional methods.

{\romannumeral 4}) We show the effectiveness of the proposed two components in enhancing model generalization by conducting extensive experiments and ablation studies involving seven baseline models on forty datasets.

\section{Related Work}
In this section, we review the NN-based methods for solving VRPs, and then introduce several recent endeavors aimed at enhancing model generalization.

\subsection{Neural Network-Based VRP Solvers} 
NN-based methods have demonstrated promising results in solving VRPs \cite{Zhu2023, YANG2023, Xin2021a, Xiao2020, Wu2024, Zhao2024, Zhou2024, Fang2024} and can be broadly classified into the following two categories: 1)~Neural construction methods. These methods produce VRP solutions either incrementally, as in autoregressive models \cite{Vinyals2015, Jin2023}, or in a one-shot manner, as in non-autoregressive models \cite{Xiao2023, Min2023}.  For instance, Kool et al. \cite{Kool2019} proposed the well-known Transformer-based model (AM) based on the attention mechanism for solving VRPs. Subsequently, numerous studies extended AM and achieved better solution quality \cite{Kim2022, Kwon2021, Felix2023}, with POMO \cite{Kwon2020} emerged as the most representative model. Additionally, various post-processing methods have been proposed to further improve the performance of neural construction methods, such as EAS \cite{Hottung2022}, SGBS \cite{Choo2022}, and certain search methods \cite{Hottung2024, Garmendia2024}. Recently, Xiao et al. \cite{Xiao2024} introduced the GNARKD method, which distills a model that incrementally produces solutions into a model producing solutions in a one-shot manner. This approach notably decreases inference time while maintaining competitive solution quality, offering a novel perspective to the neural CO community. 2)~Neural improvement methods. These methods commence with initial solutions and employ specific deep learning techniques (e.g., a pretrained NN) to guide or assist heuristics to iteratively improve the solution. In line with this research, local search \cite{Xin2021a, Hudson2022, Ma2023} and evolutionary computation \cite{Ye2023, Kim2024} algorithms are often utilized. Our primary focus in this work is on the neural construction methods, because they often demonstrate on-par solution quality to neural improvement methods but with significantly shorter inference time. This aligns with the motivation behind employing NN-based models for solving VRPs, aiming to replace intensive computations with rapid approximations \cite{Bengio2021}. 

\subsection{Generalization of Neural VRP Solvers} Prior studies primarily focus on training and testing neural VRP solvers on instances of the same size and distribution, exhibiting limited generalization in unseen scenarios \cite{Joshi2022}. Several recent studies have been proposed to enhance the size and distribution generalization of these models, respectively. Size generalization methods aim to generalize the learned model to instances of varying sizes by incorporating a specifically designed size-dependent module \cite{Ahmad2023, Wang2024} or implementing the divide-and-conquer strategy \cite{Fu2021, Li2021a, Hou2023, Pan2023, Ye2024, Yu2024} to scale up to larger instances. Despite the current research's notable emphasis on scaling-up generalization, it is crucial not to overlook the prevalence of practical VRP instances with arbitrary sizes. Therefore, scaling-down generalization must also be considered---an aspect frequently neglected by existing size generalization methods. Distribution generalization methods aim to generalize models learned from VRPs of multiple predefined distribution patterns to instances with unseen distribution by adopting various generalization algorithms, such as curriculum learning \cite{Zhang2022}, adversarial training \cite{Geisler2022} and meta-learning \cite{Zhou2023}. In addition, researchers have also explored distribution generalization challenges in other domains beyond the context of VRPs \cite{Zhai2023, Moradi2024, Zhai2023a}. However, incorporating additional intricate generalization algorithms may demand resources that are often deemed unnecessarily large, because the computational resources used for training a neural VRP solver are already substantial. For instance, the implementation of AMDKD \cite{Bi2022} requires three additional pretrained teacher models. Furthermore, both size and distribution generalization methods are often intricately designed. For instance, the diffusion model DIFUSCO \cite{Sun2023} is specifically designed to solve TSPs (and maximal independent set, a CO problem) and cannot be applied to solve other VRP variants because the diffusion process in the model is not applicable to incorporate certain VRP constraints (e.g.,  the capacity constrain of CVRP). This complexity may limit their adaptability to a specific model or a VRP variant, hindering diverse model adoption for enhanced performance or application to other VRP variants.

To address the aforementioned challenging issues, we enhance the neural VRP solver's generalization capability by imposing lightweight model architecture improvement methods (i.e., ESF and the DS decoder), rather than introducing specifically designed neural modules and computationally expensive generalization algorithms.

Our ESF and DS decoder differ from prior studies in terms of methods \cite{Xin2021, Ahmad2023, Wang2024}. Unlike prior approaches that adopt variants of attention mechanism \cite{Ahmad2023} or enhance attention mechanisms in the last layer \cite{Wang2024}, ESF improves the vanilla attention across all layers and addresses down-scaling generalization. ESF’s broader applicability also allows potential integration with prior methods \cite{Wang2024}, a feature lacking in many existing approaches (e.g., \cite{Ahmad2023}). Our DS decoder differs from MDAM \cite{Xin2021} in three key areas: it prioritizes cross-distribution generalization over solution quality, produces a single solution instead of multiple for sampleable tasks, and distinguishes training distribution patterns rather than diversifying solutions with multiple decoders.


\section{Preliminaries}
This section presents the formulation of VRPs and introduces the model architecture commonly used in neural VRP solvers.

\subsection{VRP Setting}
We define a VRP-$n$ instance as a graph $\mathcal{G}$ comprising $n$ nodes, with the node coordinates $v$. The optimal solution of a VRP is the tour $\pi^*$ that visits all nodes with the minimum cost $\operatorname{c}(\pi^*)$, i.e., the tour's overall length is the shortest. Solving different VRP variants may be subject to various problem-specific constraints. This study specifically examines two prominent VRP variants, namely TSP and CVRP, due to their representativeness and widespread applications in various domains \cite{Kim2015}. In TSP, a feasible tour entails visiting each node exactly once. CVRP extends TSP by introducing an additional depot node, a capacity constraint of the delivery vehicle, and demand requests of each node that are smaller than the capacity constraint. A tour for CVRP consists of multiple sub-tours, each represents a vehicle starting from the depot, visiting a subset of nodes and subsequently returning to the depot. It is feasible if all nodes, except for the depot, are visited exactly once and the total demand in each sub-tour does not exceed the capacity constraint.

\subsection{Architecture of Neural VRP Solvers} \label{rel_arc}
Neural VRP solvers typically adopt an encoder-decoder framework. The encoder captures node features, while the decoder compute the probabilities of nodes to be visited based on the captured features. Nodes are selected sequentially until a tour $\pi$ is completed. For a given VRP instance $\mathcal{G}$, the probability of the tour is factorized using the chain rule as follows:
\begin{equation}\label{eq1}
p_\theta(\pi|\mathcal{G})=\prod_{t=1}^{T}p_\theta(\pi(t)|\pi(<t), \mathcal{G}),
\end{equation}
where $\pi(t)$ and $\pi(<t)$ denote the selected node and the current partial solution at time step $t$, respectively, $\theta$ denotes the learnable model parameters, and $T$ denotes the number of total steps. RL-based models typically define the negative cost $-\operatorname{c}(\pi)$ of a tour $\pi$ as the reward and then utilize the REINFORCE algorithm \cite{Williams1992} to estimate the gradient of the expected reward as follows:
\begin{equation}\label{eq2}
    \nabla_\theta\mathcal{L}(\theta|\mathcal{G})=\mathbb{E}_{P_{\theta}}[(\operatorname{c}(\pi)-\operatorname{c}(b(\mathcal{G}))\nabla_\theta \operatorname{log}\, p_{\theta}(\pi|\mathcal{G})],
\end{equation}
where $b(\mathcal{G})$ denotes the tour baseline for the VRP instance $\mathcal{G}$. Furthermore, the encoder and the decoder often comprise solely the feed-forward layers and the attention modules. The former involves the propagation of data, while the latter aids in mapping a query $Q\in\mathbb{R}^{n\times d_h}$, where $d_h$ denotes the hidden dimension, to an output using a set of key-value pairs $K\in\mathbb{R}^{n\times d_h}$ and $V\in\mathbb{R}^{n\times d_h}$ based on the Softmax function~$\operatorname{\sigma}$ as follows:
\begin{equation}\label{eq3}
    \operatorname{Attn}(Q, K, V)=\operatorname{\sigma}(QK^T)V,
\end{equation}
\begin{equation}\label{eq4}
   \operatorname{\sigma}(x_i) = \frac{e^{x_{i}}}{\sum_{j=1}^n e^{x_{j}}}, i\in\{1, 2, \dots, n\}.
\end{equation}

\section{Methodology}\label{method}
To improve the generalization of neural VRP solvers without relying on intricate generalization algorithms and specifically designed modules, we conduct generalization analysis from the lens of model architecture, and propose two generic components to enhance the size and distribution generalization, respectively.  

\subsection{Scaling Factor for Size Generalization}\label{size_section}
As introduced in Section~\ref{rel_arc}, neural VRP solvers commonly consist of only the feed-forward layers and the attention modules. The former exclusively involves the data $X\in\mathbb{R}^{d_h\times n}$ propagation and remains independent of the node number $n$, denoted as $WX\in\mathbb{R}^{d_h\times n}$, where $W\in\mathbb{R}^{d_h\times d_h}$ denotes the learnable parameters. Thus, for size generalization, our focus lies on the latter, which encompasses the computation of an attention weight matrix with a direct relationship to the size $n$, i.e., $\sigma(QK^T)\in\mathbb{R}^{n\times n}$. Notably, this perspective aligns with the conclusion drawn by \cite{Ahmad2023}, who proposed a sparse dynamic attention module. However, it is important to clarify that our objective is not to formulate a novel attention module but to improve the vanilla attention module by adjusting the attention weight matrix, because the latter has a significantly broader applicability with minimal computational overhead.

Inspired by NN-based models aiming to discern a fixed mapping pattern between inputs and outputs \cite{Zhong2021a, Joshi2022}, we attribute the model's insufficient size generalization performance (partially) to the changes in the attention weight pattern when solving VRPs of different sizes. Specifically, the feed-forward layer, which is computational independent of the size, allows a size-independent pattern in the element $x\in QK^T$ because matrices $Q$ and $K$ are computed from the feed-forward layers. Given the monotonically increasing nature of the function $e^{x}$, we deduce that the attention score element $x\in e^{QK^T}$ also adhere to a size-independent pattern. Therefore, regarding the Softmax function $\operatorname{\sigma}$ (see~(\ref{eq4})) in the attention module, the numerator $e^{x}$ follows a constant size-independent pattern, whereas the denominator's value $\sum_{j=1}^n e^{x_{j}}$ varies along with the size $n$ due to the accumulation operation $\sum_{j=1}^n$. Consequently, the resulting attention weight $\operatorname{\sigma}(QK^T)$ experiences dilution or concentration when the size increases or decreases. This leads to an unfamiliar attention weight pattern for the model when solving VRPs of different sizes. 

To alleviate this phenomenon, we further study the entropy value of the attention weight, which is a metric for measuring the informativeness of a pattern. In Proposition~\ref{pro}, we formally demonstrate that the lower bound of the attention weight entropy varies with the size $n$. Variations in the information quantity result in unfamiliar patterns that impact the generalization performance of the model.

\begin{proposition}\label{pro}
Let $\{x_1, x_2, \dots, x_n\}$ denotes a row in the  matrix $QK^T$. Then, the lower bound $\Omega$ of the entropy $\operatorname{\mathcal{H}}$ for the attention weights $\sigma(QK^T)$ varies based on the size $n$ as follows:
\begin{equation}
    \operatorname{\mathcal{H}}(\operatorname{\sigma}(x_i|1\leq i\leq n))\geq\Omega(\operatorname{ln}n).
\end{equation}
\end{proposition}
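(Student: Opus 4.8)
The plan is to lower-bound the Shannon entropy $\mathcal{H}(\sigma(x_i \mid 1\le i\le n)) = -\sum_{i=1}^n p_i \ln p_i$, where $p_i = e^{x_i}/\sum_{j=1}^n e^{x_j}$ is the attention weight, in terms of the size $n$. The natural starting point is the classical fact that entropy is maximized by the uniform distribution, giving $\mathcal{H}\le \ln n$; but here we want a \emph{lower} bound, so this direction is not directly useful and the argument must instead control how far from uniform the softmax output can be. I would exploit that the $x_i$ are not arbitrary: they are entries of a row of $QK^T$, so they are bounded in terms of the norms of the query and key embeddings, which in turn are produced by feed-forward layers and hence lie in a size-independent range. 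Writing $R = \max_i x_i - \min_i x_i$ for the ``spread'' of the logits (bounded by a constant independent of $n$ under the stated assumptions), each normalized weight satisfies $e^{-R}/n \le p_i \le 1$, i.e. no weight is too small relative to the uniform value.

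The key step is then to convert this near-uniformity into an entropy bound. I would use the elementary inequality $-\ln p_i \ge -\ln\big(e^{\,?}\big)$ — more precisely, from $p_i \le \tfrac{e^{R}}{n}\cdot\big(\tfrac{1}{n}\big)^{-1}\cdot\tfrac1n$... cleaner: since $p_i \le 1$ and also $p_i \ge e^{-R}/n$ is not what bounds $-\ln p_i$ from below; rather I want an \emph{upper} bound on each $p_i$ to lower-bound $-\ln p_i$. From $p_i = e^{x_i}/\sum_j e^{x_j} \le e^{x_i}/(n\,e^{\min_j x_j}) \le e^{R}/n$, we get $-\ln p_i \ge \ln n - R$ for every $i$. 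Summing against the weights $p_i$ (which sum to $1$) yields
\begin{equation}
\mathcal{H}(\sigma(x_i\mid 1\le i\le n)) \;=\; \sum_{i=1}^n p_i(-\ln p_i) \;\ge\; \sum_{i=1}^n p_i(\ln n - R) \;=\; \ln n - R,
\end{equation}
so one may take $\Omega(\ln n) = \ln n - R$ with $R$ a size-independent constant, establishing that the lower bound genuinely grows like $\ln n$.

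The main obstacle — and the place where the argument needs the structural assumption from the surrounding text — is justifying that $R$ (equivalently, the spread of the pre-softmax logits) can be taken independent of $n$. This is exactly the paper's claim that $x\in QK^T$ follows a ``size-independent pattern'' because $Q,K$ come from feed-forward layers that do not mix across nodes; I would state this as a hypothesis (bounded embeddings, e.g. via a fixed activation range or layer normalization) and fold the resulting bound on $\max_i x_i - \min_i x_i$ into the constant. A secondary point worth a sentence is the degenerate case where all $x_i$ are equal, for which $p_i = 1/n$ and $\mathcal{H} = \ln n$ exactly, consistent with the bound and showing it is tight up to the additive constant. If one instead wants a bound with no hypothesis on the logits at all, the cleanest fallback is to note $\max_i p_i \le$ (some expression), but this cannot give a clean $\ln n$ growth without \emph{some} control on the spread, so I would keep the boundedness assumption explicit and present the one-line summation above as the core of the proof.
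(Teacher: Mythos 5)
Your proof is correct and arrives at exactly the paper's bound $\ln n + x^{\min} - x^{\max} = \ln n - R$ via essentially the same two inequalities ($\sum_j e^{x_j} \ge n e^{x^{\min}}$ and $x_i \le x^{\max}$), merely applied pointwise to each $-\ln p_i$ and summed rather than via the paper's intermediate identity $\mathcal{H} = \ln\sum_j e^{x_j} - \mathbb{E}(x_i)$. You also correctly flag the same structural assumption the paper leans on, namely that the logit spread $R$ is size-independent because $Q$ and $K$ come from feed-forward layers.
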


\begin{proof}
    According to the definition of entropy for discrete values, denoted as $\operatorname{\mathcal{H}}(p)=-\sum_i^n p_i \operatorname{ln}p_i$, we compute the attention weight entropy as follows:
    \begin{equation}
        \begin{aligned}
        \operatorname{\mathcal{H}}(\operatorname{\sigma}(x_i))&=-\sum_i^n \frac{e^{x_i}}{\sum_j^n e^{x_j}} \operatorname{ln}\frac{e^{x_i}}{\sum_j^n e^{x_j}}\\
           &=-\sum_i^n \frac{e^{x_i}}{\sum_j^n e^{x_j}}\left(x_i-\operatorname{ln}\sum_j^n e^{x_j}\right)\\
           &=-\sum_i^n\frac{e^{x_i}}{\sum_j^n e^{x_j}} x_i + 1\cdot \operatorname{ln}\sum_j^n e^{x_j}\\
           &=\operatorname{ln}\sum_j^n e^{x_j}-\mathbb{E}(x_i).
        \end{aligned}
    \end{equation}
Furthermore, we denote the minimum and maximum values of $x_i$ as $x^\textit{min}$ and $x^\textit{max}$, respectively, i.e., $x_i\in[x^\textit{min}, x^\textit{max}]$. Then, the attention weight entropy is deduced as follows:
    \begin{equation}
        \begin{aligned}
        \operatorname{\mathcal{H}}(\operatorname{\sigma}(x_i))&\geq \operatorname{ln}(ne^{x^\textit{min}})-\underset{i\in\{1,\dots, n\}}{\max}x_i\\
        &=\ln n+x^\textit{min}-x^\textit{max}\\
        &=\Omega(\operatorname{ln}n).
        \end{aligned}
    \end{equation}
Due to the adherence of the element $x_i$ to a size-independent pattern, the difference between the minimum value $x^\textit{min}$ and the maximum value $x^\textit{max}$, i.e., $x^\textit{max} - x^\textit{min}$ can be regarded as a constant independent of size. Therefore, the lower bound of entropy $\operatorname{\mathcal{H}}$ for the attention weight $\sigma(QK^T)$ is related to the size $n$,  specifically $\ln n$, illustrating the entropy value of the attention weight varies along with the size $n$. 
\end{proof}


Intuitively, the model's size generalization can be improved by adjusting the attention weight patterns, ensuring that the entropy of attention weight during testing is close to that discovered during training. However, it is worth noting that NN typically includes extensive nonlinear transformation operations, which makes it nearly impossible to precisely align the entropy of attention weights during both testing and training. Drawing from the logarithmic relationship observed between the attention weight entropy $\operatorname{\mathcal{H}}$ and the size $n$, specifically $\operatorname{\mathcal{H}}\geq\Omega(\operatorname{ln}n)$, we introduce a scaling factor to approximate the pattern of attention weight during both testing and training. Formally, assume that the model is trained on the VRP-$n_{\textit{tr}}$ instances of size $n_{\textit{tr}}$, we define the scaling factor as $\operatorname{log}_{n_{\textit{tr}}}n_{\textit{te}}$ when solving VRP-$n_{\textit{te}}$ instances of size $n_{\textit{te}}$, and incorporate it into the attention module as follows:
\begin{equation}\label{eq5}
    \operatorname{Attn}(Q, K, V)=\operatorname{\sigma}(QK^T \cdot\operatorname{log}_{n_{\textit{tr}}}n_{\textit{te}})V.
\end{equation}
\begin{figure}[!t]
\centering
\subfigure[Model trained at a fixed size]{\includegraphics[width=.495\columnwidth]{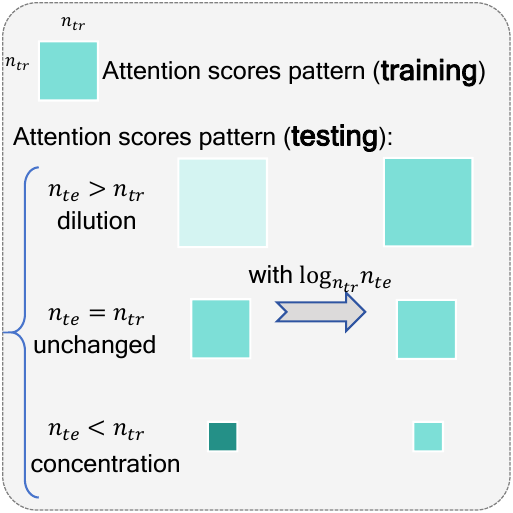}\label{sf1}}
\subfigure[Model trained at varying sizes]{\includegraphics[width=.495\columnwidth]{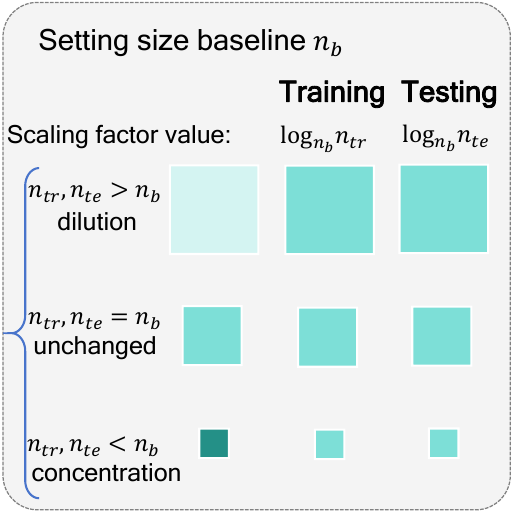}\label{sf2}} 
\caption{Illustration on how the entropy-based scaling factor regularizes attention weight patterns.}\label{sc}
\end{figure}
We depict the dynamics of this Entropy-based Scaling Factor (ESF) in Figure~\ref{sc}. Our proposed ESF has noteworthy traits as follows. 1) It does not compromise the model's performance when solving VRPs with the test size $n_{\textit{te}}$ equal to the training size $n_{\textit{tr}}$ because its value $\operatorname{log}_{n_{\textit{tr}}} n_{\textit{te}}$ is equal to 1, i.e., it falls back to the vanilla attention module $\operatorname{\sigma}(QK^T \cdot1)V$. 2)~It effectively mitigates the problems associated with attention weight dilution ($\operatorname{log}_{n_{\textit{tr}}}n_{\textit{te}}>1$) or concentration ($\operatorname{log}_{n_{\textit{tr}}}n_{\textit{te}}<1$) caused by the change in node size (refer to Figure~\ref{sf1}). 3) When models (e.g., \cite{Zhou2023}) are trained on VRPs of varying sizes, ESF is able to further enhance generalization performance by establishing a hyperparameter $n_{b}$ representing the size baseline. Specifically, when the training size $n_{\textit{tr}}$ is not fixed, we incorporate the scaling factor of $\operatorname{log}_{n_{b}}n_{\textit{tr}}$ into the attention module during training and use the attention module with the scaling factor of $\operatorname{log}_{n_{b}}n_{\textit{te}}$ during testing (refer to Figure~\ref{sf2}). Considering that most neural VRP solvers produce near-optimal solutions on VRP-50s (i.e., VRPs with 50 nodes), we set the hyperparameter $n_{b}$ to $50$ in all experiments in this work.


In summary, ESF is plug-and-play in nature for effortless implementation in practice. It only needs to be applied within each attention module of neural VRP solvers to achieve size generalization improvement.

\subsection{DS Decoder for Distribution Generalization}\label{dis_section}
Existing distribution generalization methods commonly train the neural VRP solvers with instances exhibiting multiple predefined distribution patterns, employing different intricate generalization algorithms \cite{Bi2022, Zhou2023}. However, such approaches pose three notable challenges, as illustrated in Figure~\ref{chal}: 1)~Struggling to reconcile conflicts among training instances of different distribution patterns, which is widely recognized as a challenging issue in multi-objective learning \cite{Chen2023}; 2)~Potentially leading to model degeneracy, where a model trained under multiple distribution patterns may exhibit suboptimal performance compared to the model exclusively trained under a single distribution pattern when solving VRPs from that specific distribution pattern (see Table~\ref{tab_ds}); and 3)~A model trained under multiple distribution patterns may exhibit a greater performance degradation than the model trained under a single distribution pattern when the distribution of the test cases is close to that single distribution pattern, primarily due to the joint distribution drift \cite{haug2021}. The second challenge arises when the test distribution pattern exactly matches one of the training distribution patterns. In contrast, the third challenge arises when the test distribution pattern only approximates a training distribution pattern. Compared to the second challenge, the third challenge is more realistic, because the test distribution patterns are often unknown. Furthermore, these challenges could be inherent to various generalization algorithms, regardless of their specific training methodologies. This is because the distribution space represented by a single output model struggles to adequately cover the space of the multiple training distribution patterns (see the context of Table~\ref{tab_ds}). Essentially, these generalization algorithms learn the intersection space of multiple training distribution patterns, aiming to find a model that performs reasonably well across all these distribution patterns without significant degradation. For instance, AMDKD \cite{Bi2022} adaptively chooses a specific distribution pattern that the model performs suboptimally at each epoch for learning to improve the overall performance across multiple training distribution patterns. On the contrary, we explicitly model each training distribution pattern using a model architecture improvement method. This approach aims to enable the model to learn both the intersection space across multiple training distribution patterns and the distinctions between them, thereby generalizing the distribution space represented by the model.
\begin{figure}[!t]
	\centering
\includegraphics[width=1\columnwidth]{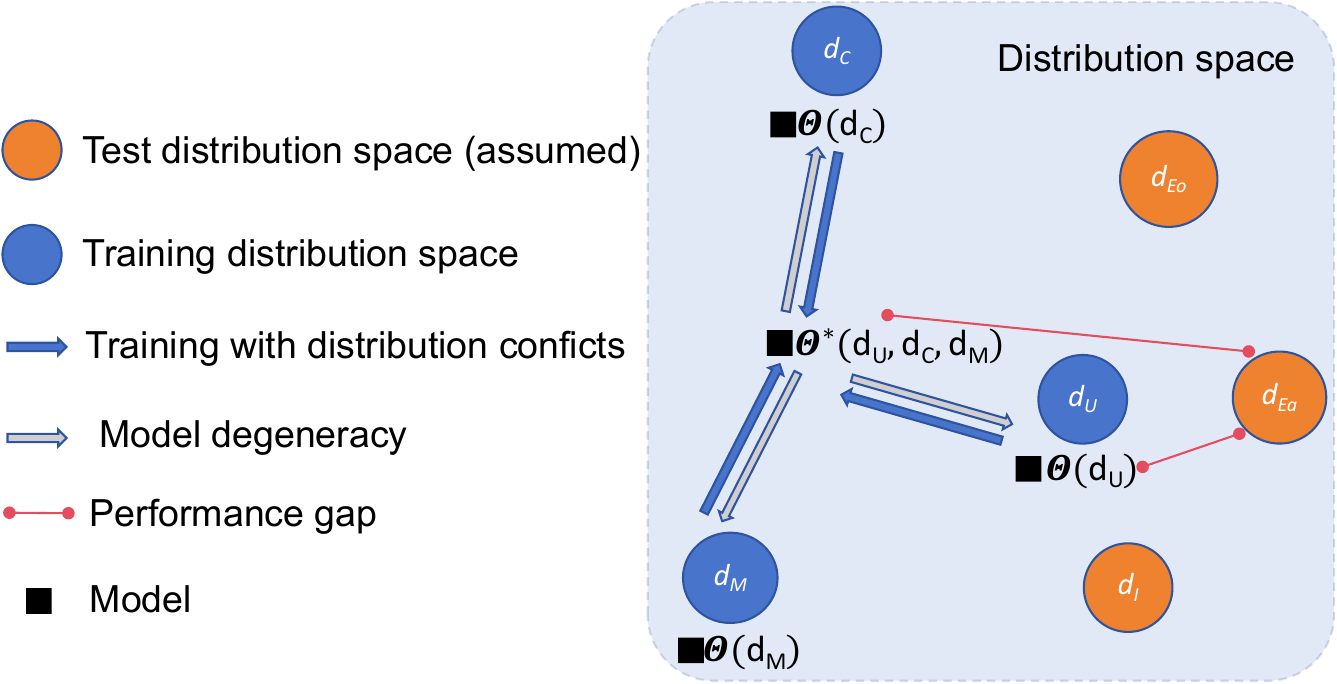}
	\caption{An illustrative example on training models with various distribution patterns. Here we denote the model parameter $\theta$ trained on the distribution $d_i$ as $\theta(d_i)$. The model trained on multi-distribution patterns (e.g., $\theta(d_U, d_C, d_M)$ may struggle to adequately cover the space of the multiple training distribution patterns ($d_U, d_C, d_M$).}\label{chal}
\end{figure}
\begin{figure*}[!t]
	\centering
\includegraphics[width=2.1\columnwidth]{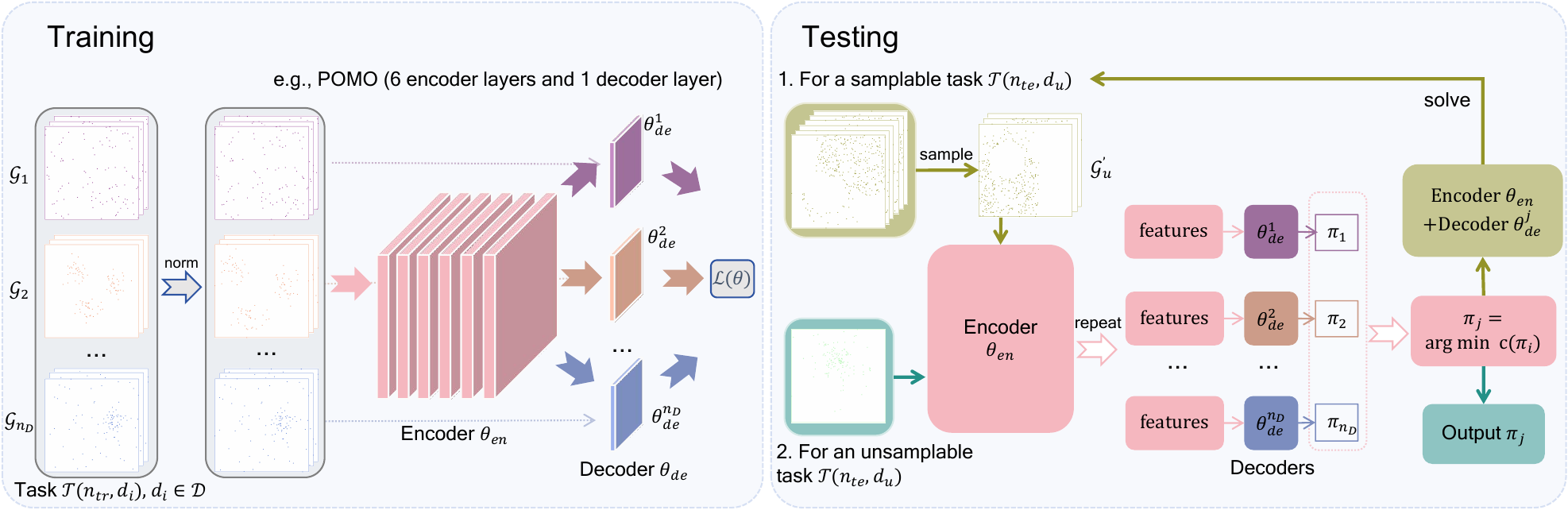}
	\caption{Architecture of the DS decoder, which employs multiple light decoders to model the representation of VRPs with multiple distribution patterns. }\label{de_dec}
\end{figure*}



Due to the success of the prevalent model architecture of heavy encoder and light decoder (e.g., POMO \cite{Kwon2020} employs six encoder layers and one decoder layer), we employ a shared heavy encoder to infer the fixed representation for VRPs across varying distribution patterns. Simultaneously, we employ multiple light decoders to model the representation of VRPs with multiple distribution patterns (see Section~\ref{rel_arc} for the relationship between the encoder and decoder of model architecture). The implementation of these Distribution-Specific (DS) decoders allows for explicitly incorporating the distinction among VRPs of different distribution patterns into the light decoder. By predefining multiple training distribution patterns for learning, as done in prior studies \cite{Zhou2023}, our DS decoder inherently enables the model to discern the optimal representation for each distribution pattern, incurring only marginal computational cost of auxiliary light decoders\footnote{A decoder typically includes only a feed-forward layer and an attention module \cite{Kool2019, Kwon2020}.}. We present the architecture of the proposed DS decoder in Figure~\ref{de_dec}. In the following paragraphs, we detail the training and testing process of applying the DS decoder. \\

\noindent\textbf{Training.} We define a graph $\mathcal{G}_i$ of a training instance sampled from a task $\mathcal{T}(n_{\textit{tr}}, d_i)$ with size $n_{\textit{tr}}$ and distribution $d_i\in \mathcal{D}$, where $\mathcal{D}$ denotes a set of distribution patterns with a total count of $n_{\mathcal{D}}$. Then, the model comprises one encoder and $n_{\mathcal{D}}$ decoders, where each decoder corresponds to a training distribution pattern. The detailed process of generating training instances of different distribution patterns follows \cite{Zhou2023} exactly. Considering that the representation of certain distribution patterns, such as the cluster distribution \cite{Bi2022}, is limited to a small segment of the overall graph, we perform normalization $\phi(\cdot)$ to the node coordinates $v$ of training instances as follows:
\begin{equation}\label{eq10}
    \phi(v) = \frac{v-\min_{v\in\mathcal{G}_i}(v)}{\max_{c\in \{x, y\}}(\max_{i, j\in\{1,\dots, n\}}(v^c_i-v^c_j))},
\end{equation}
where $v^x_i$ and $v^y_i$ denote the $x$ and $y$-coordinates of node $v_i$, respectively. Then, we formulate the training objective as follows:
\begin{equation}\label{eq6}
    \theta^*=\arg \min \limits_{\theta}\mathbb{E}_{\mathcal{G}_i\sim \mathcal{T}(n_{\textit{tr}}, d_i), d_i\in\mathcal{D}, \pi\sim  p_\theta(\pi|\mathcal{G}_i)}\operatorname{c}(\pi|\mathcal{G}_i).
\end{equation}
Furthermore, the gradient of (\ref{eq6}) can be estimated using (\ref{eq2}) as follows:
\begin{equation}\label{eq7}
\begin{aligned}
    \nabla_\theta\mathcal{L}(\theta)=&\frac{1}{n_{\mathcal{D}}} \sum_{i=1}^{n_{\mathcal{D}}}\mathbb{E}_{\mathcal{G}_i\sim \mathcal{T}(n_{\textit{tr}}, d_i), p_{\theta_{\textit{en}},\theta_{\textit{de}}^i}(\pi|\mathcal{G}_i)}[(\operatorname{c}(\pi)-\operatorname{c}(b(\mathcal{G}_i))\\&\cdot\nabla_{\theta_{\textit{en}},\theta_{\textit{de}}^i}  \operatorname{log}\, p_{\theta_{\textit{en}},\theta_{\textit{de}}^i}(\pi|\mathcal{G}_i)],
\end{aligned}
\end{equation}
where $\theta_{\textit{en}}$ and $\theta_{\textit{de}}^i$ denote the parameters of the encoder and the $i$th decoder, respectively. \\

\noindent\textbf{Testing.} For a samplable task $\mathcal{T}(n_{\textit{te}}, d_u)$ with an unknown distribution~$d_u$, we first randomly sample a limited number of instances $\mathcal{G}_u^{'}$ from this task for validation (with the same distribution $d_u$ but not the test instances). Subsequently, we identify the decoder tailored to the distribution most proximate to the unknown distribution~$d_u$ based on the validation results as follows:
\begin{equation}\label{eq8}
    \theta_{\textit{de}}^*=\underset{i\in\{1,\dots, n_{\mathcal{D}}\}}{\arg \min} \operatorname{c}(\pi_i|\mathcal{G}_u^{'}\sim\mathcal{T}(n_{\textit{te}}, d_u), \theta_{\textit{en}}, \theta_{\textit{de}}^i),
\end{equation}
where $\pi_i$ denotes the tour produced by the $i$th decoder using greedy search. Finally, we employ the policy (i.e., the encoder $\theta_{\textit{en}}$ and the determined decoder $\theta_{\textit{de}}^*$) to solve VRPs from the unseen task $\mathcal{T}(n_{\textit{te}}, d_u)$.


Furthermore, considering the challenge of sampling from specific tasks, such as those within the TSPLIB dataset that comprise a few or only one single instance, we first input the test instance $\mathcal{G}_u$ within this unsamplable task into the encoder $\theta_{\textit{en}}$ to obtain the node features. Then, we repeat these node features $n_{\mathcal{D}}$ times and feed them into the decoders to produce $n_{\mathcal{D}}$ tours, each corresponding to a respective decoder. Finally, we select the optimal tour among these $n_{\mathcal{D}}$ tours as the output as follows:
\begin{equation}
\label{eq13}
\pi^* = \underset{i\in\{1,\dots, n_{\mathcal{D}}\}}{\arg \min} \operatorname{c}(\pi_i|\mathcal{G}_u, \theta_{\textit{en}}, \theta_{\textit{de}}^i).
\end{equation}

\begin{table*}[!t]
\centering
\small
\caption{Cross-size generalization performance of baseline models with (ours) and without (original) applying ESF during testing}\label{tab_log1}
\begin{threeparttable}
\begin{tabular}{ll|l|ccccccccc|cc}
\toprule

&\multirow{4}{*}{Model, $n_{\textit{tr}}$}                    &\multirow{4}{*}{inference}    & \multicolumn{3}{c}{$n_{\textit{te}}$=50}              & \multicolumn{3}{c}{$n_{\textit{te}}$=100}        & \multicolumn{3}{c|}{$n_{\textit{te}}$=200}           &\multicolumn{2}{c}{Avg. gain (\%) $\uparrow$  }               \\
& & & \multicolumn{2}{c}{O-gap (\%)$\downarrow$} & \multirow{3}{*}{\tabincell{c}{relative\\gain (\%)$\uparrow$}} & \multicolumn{2}{c}{O-gap (\%) $\downarrow$} & \multirow{3}{*}{\tabincell{c}{relative\\gain (\%)$\uparrow$}} & \multicolumn{2}{c}{O-gap (\%) $\downarrow$} & \multicolumn{1}{c|}{\multirow{3}{*}{\tabincell{c}{relative\\gain (\%)$\uparrow$}}}  & \multicolumn{1}{|c}{\multirow{3}{*}{\tabincell{c}{Scaling\\up}}}& \multirow{3}{*}{\tabincell{c}{Scaling\\down}}\\
& & & \multirow{2}{*}{\tabincell{c}{orig-\\inal}} & \multirow{2}{*}{\tabincell{c}{+ESF\\(ours)}} &  & \multirow{2}{*}{\tabincell{c}{orig-\\inal}} & \multirow{2}{*}{\tabincell{c}{+ESF\\(ours)}} &  & \multirow{2}{*}{\tabincell{c}{orig-\\inal}} & \multirow{2}{*}{\tabincell{c}{+ESF\\(ours)}} & & &  \\
& & & & & & & & & & & & & \\
  \midrule
 \multirow{12}{*}{\rotatebox{90}{TSP}}                     & \multirow{2}{*}{POMO, 50}  & G, no aug      & 0.24 & 0.24   & - & 1.28 & 1.01 & \textbf{+21.09} & 11.06 & 8.65 & \textbf{+21.79} & \multirow{4}{*}{\textbf{+23.88}} & \multirow{4}{*}{\textbf{+11.14}}  \\
                      &                            &G, $\times$8 aug          & 0.15 & 0.15 &- & 0.68 & 0.47 &\textbf{+30.88} & 8.95 & 6.79&\textbf{+24.13}  \\
                      & \multirow{2}{*}{POMO, 100} & G, no aug      & 0.47 & 0.39 & \textbf{+17.02} & 0.36 & 0.36 & -& 2.08 & 1.67 &\textbf{+19.71}   \\
                      &                            & G, $\times$8 aug          & 0.19 & 0.18 & \textbf{+5.26} & 0.13 & 0.13 &-  & 1.40 & 1.04 & \textbf{+25.71}   \\
                      \cmidrule{2-14}
                      & \multirow{2}{*}{AM, 50}    & G       & 1.78 & 1.78 &- & 4.94 & 4.83 &\textbf{+2.23} & 13.42 & 12.58 & \textbf{+6.26} & \multirow{4}{*}{\textbf{+15.64}} & \multirow{4}{*}{\textbf{+6.81}} \\
                      &                   & S, 1280 & 0.61 & 0.61 & -& 2.83 & 2.32 &\textbf{+17.73} & 20.51 & 10.72 & \textbf{+47.73} & & \\
                      & \multirow{2}{*}{AM, 100}   & G      & 4.45 & 4.38 & \textbf{+1.57} & 4.34 & 4.34 &- & 8.06 & 7.79 &\textbf{+3.35}   \\
                      &                            & S, 1280 & 2.49 & 2.19 &\textbf{+12.05} & 2.31 &2.31 &- & 6.69 & 5.58 & \textbf{+16.59} & &   \\
                      \cmidrule{2-14}
                    & \multirow{2}{*}{MDAM, 50}   & G      &0.75 & 0.75 &- & 3.02 & 2.90 &\textbf{+3.97} & 11.45 & 10.93 & \textbf{+4.54} &\multirow{4}{*}{\textbf{+2.74}} &\multirow{4}{*}{\textbf{+4.02}} \\
                      &                            & S, 30 & 0.17 & 0.17 &- & 0.77 & 0.75 &\textbf{+2.59} & 4.92 & 4.81 & \textbf{+2.24}    \\
                      & \multirow{2}{*}{MDAM, 100}   & G       &1.88 & 1.85 &\textbf{+1.59} & 2.18 & 2.18 & - & 5.70 & 5.55 & \textbf{+2.63} & &    \\
                      &                            & S, 30 & 0.31 & 0.29 &\textbf{+6.45} & 0.47 & 0.47 &- & 2.14 & 2.13 &\textbf{+0.47}    \\
                      \hline
\hline
\multirow{12}{*}{\rotatebox{90}{CVRP}} & \multirow{2}{*}{POMO, 50}  & G, no aug      & 1.94 & 1.94 &- & 5.26 & 5.79 & -10.07 & 11.28 & 15.39 & -36.43 & \multirow{4}{*}{-16.52} & \multirow{4}{*}{\textbf{+9.23}} \\
                      &                            &G, $\times$8 aug          & 0.96 & 0.96 & - & 3.77 & 4.19 & -11.14 & 9.55 & 13.27& -38.95 & &  \\
                      & \multirow{2}{*}{POMO, 100} & G, no aug      & 4.11 & 3.72 & \textbf{+9.49} & 1.85 & 1.85 &- & 4.53 & 4.62 & -1.99    \\
                      &                            &G, $\times$8 aug          & 2.23 & 2.03 & \textbf{+8.97} & 0.97 & 0.97 & - & 3.47 & 3.49 & -0.57    \\
                      \cmidrule{2-14}
                      & \multirow{2}{*}{AM, 50}    & G       & 6.01 & 6.01 &- & 8.42 & 8.30 &\textbf{+1.42} & 16.90 & 15.81 & \textbf{+6.45} & \multirow{4}{*}{\textbf{+9.44}} & \multirow{4}{*}{\textbf{+5.08}}  \\
                      &                            & S, 1280 & 2.58 & 2.58 &- & 4.35 & 4.19 &\textbf{+3.68} & 15.60 & 11.65 & \textbf{+25.32}  \\
                      & \multirow{2}{*}{AM, 100}   & G       & 8.75 & 8.54 &\textbf{+2.41} & 7.10 & 7.10 &- & 8.06 & 7.79 & \textbf{+3.35} & &  \\
                      &                            & S, 1280 & 4.13 & 3.81 & \textbf{+7.75} & 3.55 & 3.55 &- & 6.69 & 5.59 &\textbf{+16.44}  \\
                      \cmidrule{2-14}
                      & \multirow{2}{*}{MDAM, 50}   & G       &3.62 & 3.62 & - &5.87 & 5.90 &-0.51 &10.77 & 11.26 & -4.55 & \multirow{4}{*}{-3.55} &\multirow{4}{*}{\textbf{+3.61}} \\
                      &                            & S, 30 &1.36 & 1.36 & - &3.03 & 3.06 & -0.99 & 8.45 & 9.82 & -16.21 & &  \\
                      & \multirow{2}{*}{MDAM, 100}   & G       &6.65 & 6.60 &\textbf{+0.75} & 4.86 & 4.86 &- & 7.51 & 7.44 & \textbf{+0.93}   \\
                      &                            & S, 30 & 0.31 & 0.29 &\textbf{+6.45} & 2.50 & 2.50 &- & 4.97 & 4.97 &-0.00 & &    \\
                      \bottomrule
\end{tabular}
\begin{tablenotes}
\item[] Symbol ``O-gap'' denotes the optimality gap. Unless stated otherwise, the optimality gap is computed w.r.t. the exact solvers Concorde (for TSP) and LKH3 (for CVRP). The relative gain is obtained from the equation of $1-(\text{the result of baseline with ESF})/(\text{the result of baseline without ESF})$. Symbols ``G'' and ``S'' denote the greedy search mode and the sampling mode used in \cite{Kool2019}, respectively, and the value followed by symbol ``S'' denotes the number of samples. Symbol ``aug'' denotes the augment technique used in \cite{Kwon2020}. 
\end{tablenotes}
\end{threeparttable}
\end{table*}

In practice, our approach only introduces multiple light decoders corresponding to multiple predefined distribution patterns to achieve distribution generalization improvement. Compared to the testing process of the conventional methods \cite{Bi2022, Zhou2023}, our approach only introduces an extra sampling step for decoder determination on samplable tasks, or the additional computational resource of $n_{\mathcal{D}}-1$ light decoders on unsamplable tasks. This minimal increase in computational resource overhead is deemed acceptable in practical applications. See Table~\ref{tab_ds} for the extra inference time incurred by our approach.

\begin{table*}[!t]
\small
\centering
\caption{Large-scale generalization performance of baseline models ($n_{\textit{tr}}=100$) with (ours) and without (original) applying ESF during testing on TSPs}\label{tab_ls}
\begin{threeparttable}
\begin{tabular}{l|l|ccccccccc|c}
\toprule
\multirow{4}{*}{Model}                    &\multirow{4}{*}{inference}    & \multicolumn{3}{c}{$n_{\textit{te}}$=200}              & \multicolumn{3}{c}{$n_{\textit{te}}$=500}        & \multicolumn{3}{c|}{$n_{\textit{te}}$=1000}           &\multicolumn{1}{c}{\multirow{4}{*}{\tabincell{c}{Avg.\\gain (\%) $\uparrow$} }}               \\
& & \multicolumn{2}{c}{O-gap (\%)$\downarrow$} & \multirow{3}{*}{\tabincell{c}{relative\\gain (\%)$\uparrow$}} & \multicolumn{2}{c}{O-gap (\%) $\downarrow$} & \multirow{3}{*}{\tabincell{c}{relative\\gain (\%)$\uparrow$}} & \multicolumn{2}{c}{O-gap (\%) $\downarrow$} & \multicolumn{1}{c|}{\multirow{3}{*}{\tabincell{c}{relative\\gain (\%)$\uparrow$}}}  & \\
& & \multirow{2}{*}{\tabincell{c}{orig-\\inal}} & \multirow{2}{*}{\tabincell{c}{+ESF\\(ours)}} &  & \multirow{2}{*}{\tabincell{c}{orig-\\inal}} & \multirow{2}{*}{\tabincell{c}{+ESF\\(ours)}} &  & \multirow{2}{*}{\tabincell{c}{orig-\\inal}} & \multirow{2}{*}{\tabincell{c}{+ESF\\(ours)}} & &   \\
& & & & & & & & & & \\
                       \midrule
\multirow{2}{*}{LEHD}  & G                                         & 0.86 & 0.84 &\textbf{+2.32} & 1.56 & 1.47 & \textbf{+5.77} & 3.17 & 2.59 & \textbf{+18.30} & \multirow{2}{*}{\textbf{+11.06}}\\

                       & RRC, 10                                         & 0.42 & 0.36 &\textbf{+14.28} & 0.83 & 0.78 &\textbf{+6.02} & 1.98 & 1.59 &\textbf{+19.70} \\
                       \midrule
\multirow{2}{*}{BQ}    & G                                         & 0.62 & 0.57 &\textbf{+8.06} & 1.08 & 0.98 &\textbf{+9.26} & 2.10 & 1.71 &\textbf{+18.57} & \multirow{2}{*}{\textbf{+10.45}} \\
                       & S, 16                                          & 0.10 & 0.10 &\textbf{+0.01}  & 0.58 & 0.51 &\textbf{+12.07} & 1.36 & 1.16 &\textbf{+14.70}\\
                    \bottomrule
\end{tabular}
\begin{tablenotes}
\item[] Symbol ``RRC'' denotes the solution update strategy used in \cite{Luo2023}.
\end{tablenotes}
\end{threeparttable}
\end{table*}

\begin{table*}[!t]
\centering
\small
\caption{In/cross-size generalization performance of OMNI-VRP ($n_{\textit{tr}}\in [50, 200]$) with (ours) and without (original) applying ESF during training}\label{tab_log2}
\begin{threeparttable}
\begin{tabular}{llcccccc|c|cccccc|c}
\toprule
& \multicolumn{1}{l|}{\multirow{4}{*}{Inference}} & \multicolumn{6}{c|}{In-size optimality gap (\%)$\downarrow$ }         &\multirow{4}{*}{\tabincell{c}{Avg. \\gain\\(\%)$\uparrow$}}        & \multicolumn{6}{c|}{Cross-size optimality gap (\%)$\downarrow$ }      &\multirow{4}{*}{\tabincell{c}{Avg. \\gain\\(\%)$\uparrow$}}                       \\
& \multicolumn{1}{c}{}                           & \multicolumn{2}{|c}{$n_{\textit{te}}$=100}              & \multicolumn{2}{c}{$n_{\textit{te}}$=150}             & \multicolumn{2}{c|}{$n_{\textit{te}}$=200}            & & \multicolumn{2}{c}{$n_{\textit{te}}$=300}              & \multicolumn{2}{c}{$n_{\textit{te}}$=500}               & \multicolumn{2}{c|}{$n_{\textit{te}}$=1000} &               \\

& \multicolumn{1}{c|}{}   & \multirow{2}{*}{\tabincell{c}{orig-\\inal}} & \multirow{2}{*}{\tabincell{c}{+ESF\\(ours)}} & \multirow{2}{*}{\tabincell{c}{orig-\\inal}} & \multirow{2}{*}{\tabincell{c}{+ESF\\(ours)}} &\multirow{2}{*}{\tabincell{c}{orig-\\inal}} & \multirow{2}{*}{\tabincell{c}{+ESF\\(ours)}} & &\multirow{2}{*}{\tabincell{c}{orig-\\inal}}& \multirow{2}{*}{\tabincell{c}{+ESF\\(ours)}}&\multirow{2}{*}{\tabincell{c}{orig-\\inal}} & \multirow{2}{*}{\tabincell{c}{+ESF\\(ours)}} &\multirow{2}{*}{\tabincell{c}{orig-\\inal}}& \multirow{2}{*}{\tabincell{c}{+ESF\\(ours)}}\\
&\multicolumn{1}{c|}{}   & & & & & & & & & & & & & &\\
\midrule
\multicolumn{1}{c}{\multirow{4}{*}{\rotatebox{90}{TSP}}}  & \multicolumn{1}{c|}{G, no aug}                                      & 2.22 & 1.87 & 2.45 & 2.06 & 2.80 & \multicolumn{1}{c|}{2.40} & \multirow{4}{*}{\textbf{+18.38}} & 4.24 & 3.66  & 8.95 & 7.73  &   20.37 & 18.23   & \multirow{4}{*}{\textbf{+13.80}}           \\
\multicolumn{1}{c}{}                      & \multicolumn{1}{c|}{G, $\times$8 aug}                                         & 1.33 & 1.02 & 1.61 & 1.32 & 2.00 & \multicolumn{1}{c|}{1.67} & & 3.38 & 2.88 & 7.89 & 6.71  &   19.27& 17.05    &             \\
  & \multicolumn{1}{c|}{F, G, no aug}                                      & 2.03 & 1.63 & 2.36 & 1.98 & 2.86 & \multicolumn{1}{c|}{2.40} & & 4.11 & 3.49  & \multicolumn{2}{c}{-}  &   \multicolumn{2}{c|}{-}                 \\
\multicolumn{1}{c}{}                      & \multicolumn{1}{c|}{F, G, $\times$8 aug}                                         & 1.23 & 0.88 & 1.54 & 1.26 & 2.03 & \multicolumn{1}{c|}{1.66} & & 3.26 & 2.73 & \multicolumn{2}{c}{-}  &   \multicolumn{2}{c|}{-}   &             \\
                      \hline
\hline
\multicolumn{1}{c}{\multirow{4}{*}{\rotatebox{90}{CVRP}}} & \multicolumn{1}{c|}{G, no aug}                                       & 3.79 & 3.43 & 3.60 & 3.48 & 3.81 & \multicolumn{1}{c|}{3.78} & \multirow{4}{*}{\textbf{+4.51}} & 4.78 & 4.60 & 7.44 & 7.00 & 16.91 & 15.67 & \multirow{4}{*}{\textbf{+4.86}}\\
\multicolumn{1}{c}{}                      & \multicolumn{1}{c|}{G, $\times$8 aug}                                            & 2.78 & 2.48 & 2.81 & 2.67 & 3.11 & \multicolumn{1}{c|}{3.01} & & 4.07& 3.95  & 6.65& 6.24  & 15.13& 14.30 & \\    
 & \multicolumn{1}{c|}{F, G, no aug}                                       & 3.39& 3.22 & 3.53& 3.41 & 3.82& \multicolumn{1}{c|}{3.78} & & 4.60& 4.44 & \multicolumn{2}{c}{-} & \multicolumn{2}{c|}{-} & \\
\multicolumn{1}{c}{}                      & \multicolumn{1}{c|}{F, G, $\times$8 aug}                                            & 2.49& 2.36 & 2.75& 2.65 & 3.11& \multicolumn{1}{c|}{3.01} & & 3.95& 3.80  & \multicolumn{2}{c}{-}  & \multicolumn{2}{c|}{-} & \\   
\bottomrule
\end{tabular}
\begin{tablenotes}
\item[] Symbol ``F'' denotes the few-shot setting used in \cite{Zhou2023}. We are unable to evaluate VRPs of the size larger than 500 with the few-shot setting due to the constraint of the GPU memory.
\end{tablenotes}
\end{threeparttable}
\end{table*}

\section{Experimental Results}
We comprehensively evaluate the effectiveness of the proposed ESF and DS decoder on TSPs and CVRPs, respectively, using a computer equipped with an Intel(R) Xeon(R) Gold 6254 CPU and an NVIDIA RTX 3090 GPU. Our implementations of ESF and DS decoder are accessible online\footnote{URL:~\href{https://github.com/xybFight/VRP-Generalization}{https://github.com/xybFight/VRP-Generalization}}

\subsection{Effectiveness Analysis of ESF}
To demonstrate the effectiveness of the proposed ESF during testing and training (refer to Figures~\ref{sf1} and \ref{sf2}, respectively), we conduct two sets of experiments, each time using a test dataset comprising 10,000 randomly generated VRP instances, following the same approach adopted by \cite{Kwon2020}.

Firstly, we evaluate the size generalization performance of applying ESF during the test process. To this end, we select three representative models (AM \cite{Kool2019}, POMO \cite{Kwon2020}, and MDAM \cite{Xin2021}) as the baselines. Specifically, we assess the original cross-size generalization performance of these publicly available pre-trained models, and compare it with that of these models incorporating our proposed ESF: $\operatorname{log}_{n_{\textit{tr}}}n_{\textit{te}}$. Considering that these models are trained on VRPs of two different sizes $n_{\textit{tr}}\in\{50, 100\}$, we conduct tests on VRPs of three different sizes $n_{\textit{te}}\in\{50, 100, 200\}$ to evaluate the ESF's capability to scaling up ($n_{\textit{tr}}=50\rightarrow n_{\textit{te}}\in\{100, 200\}$, $n_{\textit{tr}}=100\rightarrow n_{\textit{te}}\in\{ 200\}$) and scaling down ($n_{\textit{tr}}=100\rightarrow n_{\textit{te}}\in \{50\}$). We present the results conducted on TSPs and CVRPs in Table~\ref{tab_log1}. As shown, the baseline models consistently exhibit performance degradation in both up-scaling and down-scaling scenarios, underscoring the critical need for investigating comprehensive size generalization methods. For TSPs, our proposed ESF effectively enhances the size generalization capabilities of these models, demonstrating an average relative improvement of $14.09\%$ and $7.32\%$ for scaling up and scaling down instances, respectively. For CVRPs, ESF manifests its effectiveness primarily in down-scaling scenarios (with an average relative improvement of $5.97\%$), while potentially compromising the model's up-scaling capability (with an average relative drop of $3.54\%$). We posit this phenomenon arises from the following two reasons: 1) ESF ($\operatorname{log}_{n_{\textit{tr}}}n_{\textit{te}}>1$) increases the likelihood of selecting the depot node, leading to more sub-tours within the solution and consequently increasing the overall solution length. 2) The multiple selections of the depot node in CVRP leads to a solution sequence length exceeding the test size of $n_{\textit{te}}$ and remaining non-constant for different instances. Introducing a fixed scaling factor may exacerbate unfamiliar changes in the attention weight, subsequently resulting in performance degradation. A potential solution to these issues is to incorporate an auxiliary network to dynamically adjust the scaling factor's value for CVRPs. We leave this approach for future research because in this work, we focus on the generic generalization method across varying VRP variants. In subsequent experiments involving the incorporation of ESF during testing, we only present the experimental results on TSPs, excluding CVRPs.

Furthermore, considering the large-scale VRPs (e.g., $n_{\textit{te}}=1000$) are akin to real-world cases than the smaller counterparts (e.g., $n_{\textit{te}}=50$), we select two recent models (LEHD \cite{Luo2023}, BQ \cite{Drakulic2023}), both trained on instances with size $n_{\textit{tr}}=100$ as baselines to further evaluate the effectiveness of ESF on TSPs. It is worthy highlighting that both of these models have showcased the state-of-the-art (SOTA) results in solving large-scale TSPs. As the results presented in Table~\ref{tab_ls}, ESF further enhances the generalization performance of these two models for large-scale TSPs, with an average relative improvement of $10.76\%$, thereby highlighting the efficacy of our proposed ESF.

\begin{figure*}[!t]
\centering
\subfigure[$\textit{d}_\textit{U}$, Uniform]{\includegraphics[width=.33\columnwidth]{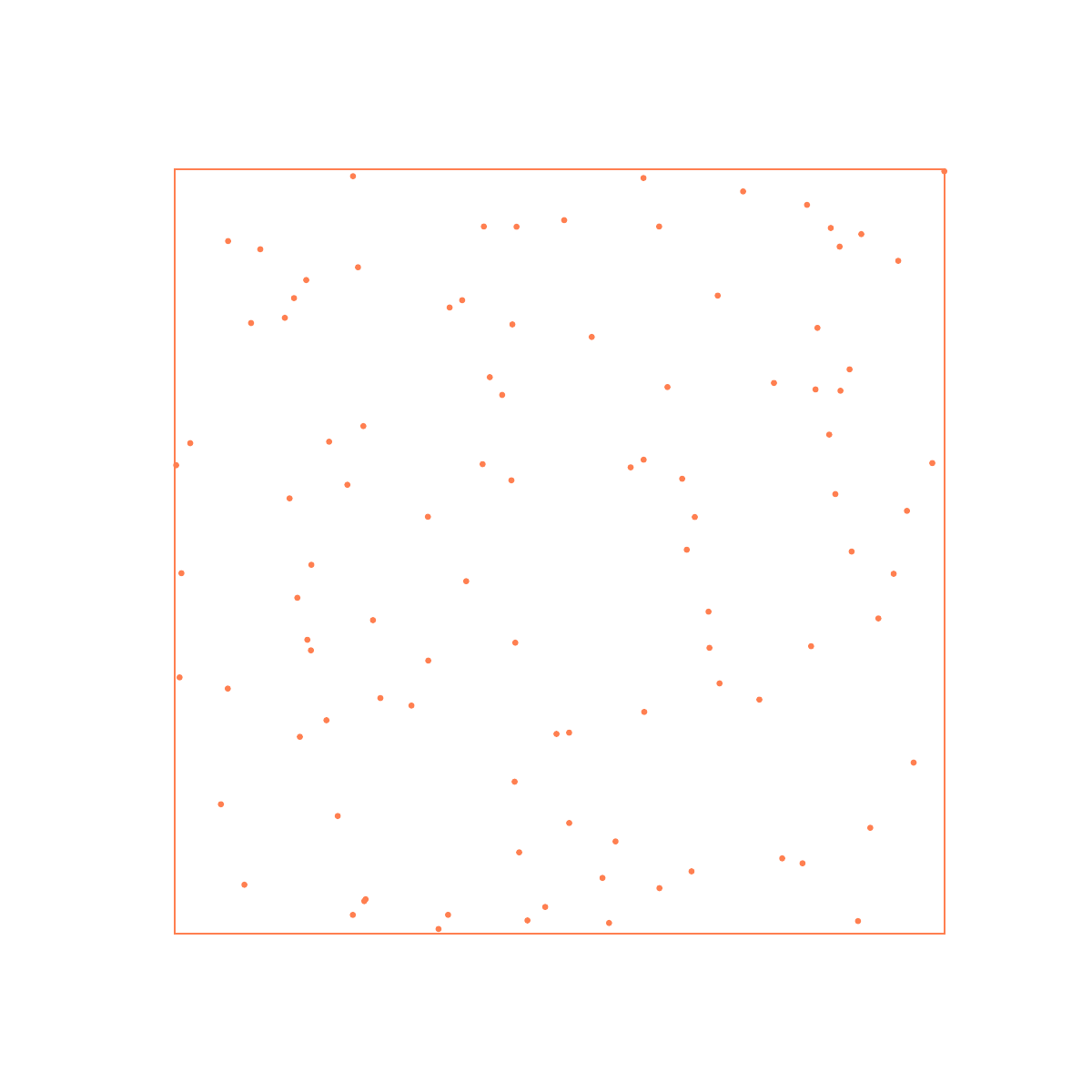}\label{figu}}
\subfigure[$\textit{d}_\textit{C}$, Cluster]{\includegraphics[width=.33\columnwidth]{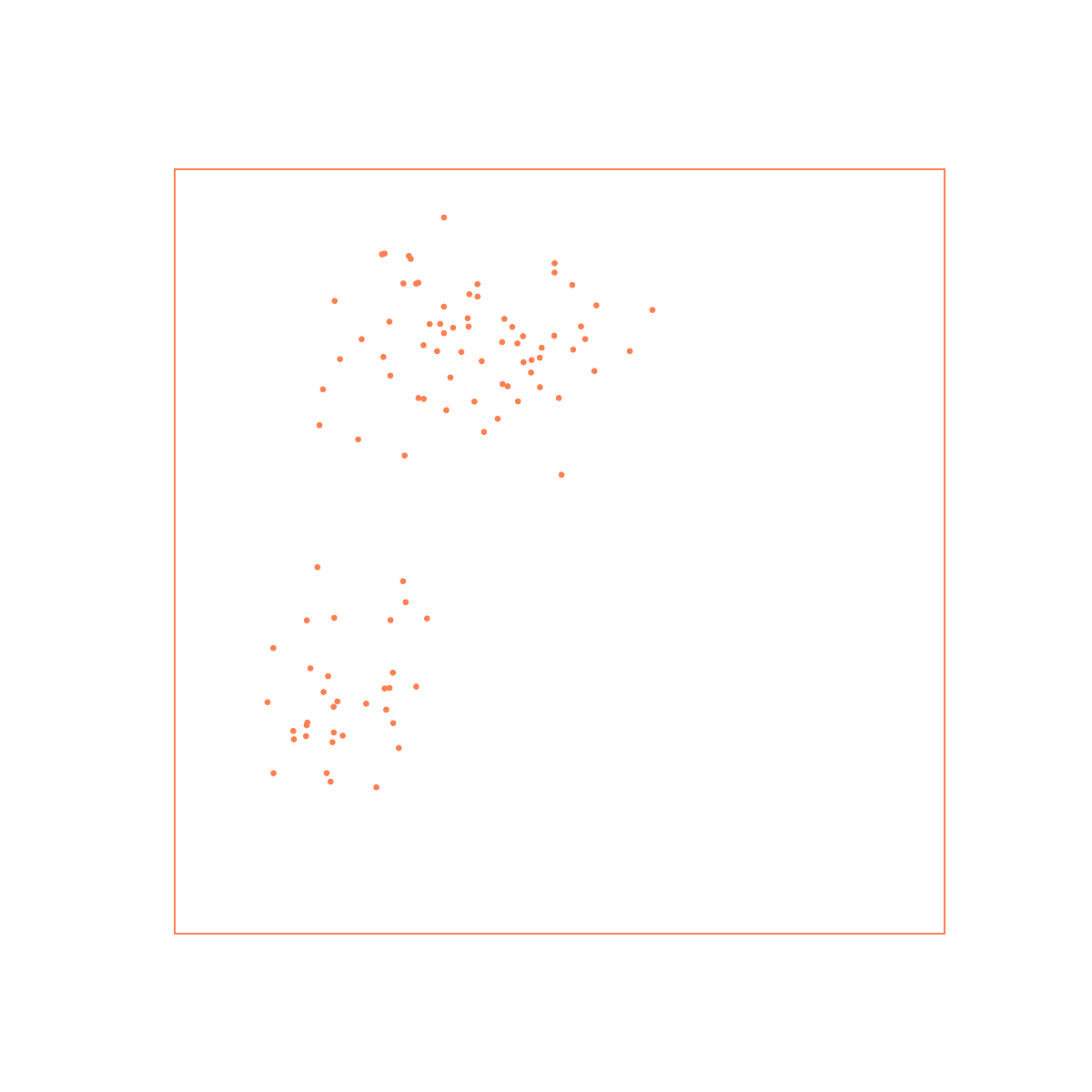}\label{figu}}
\subfigure[$\textit{d}_\textit{M}$, Mixed]{\includegraphics[width=.33\columnwidth]{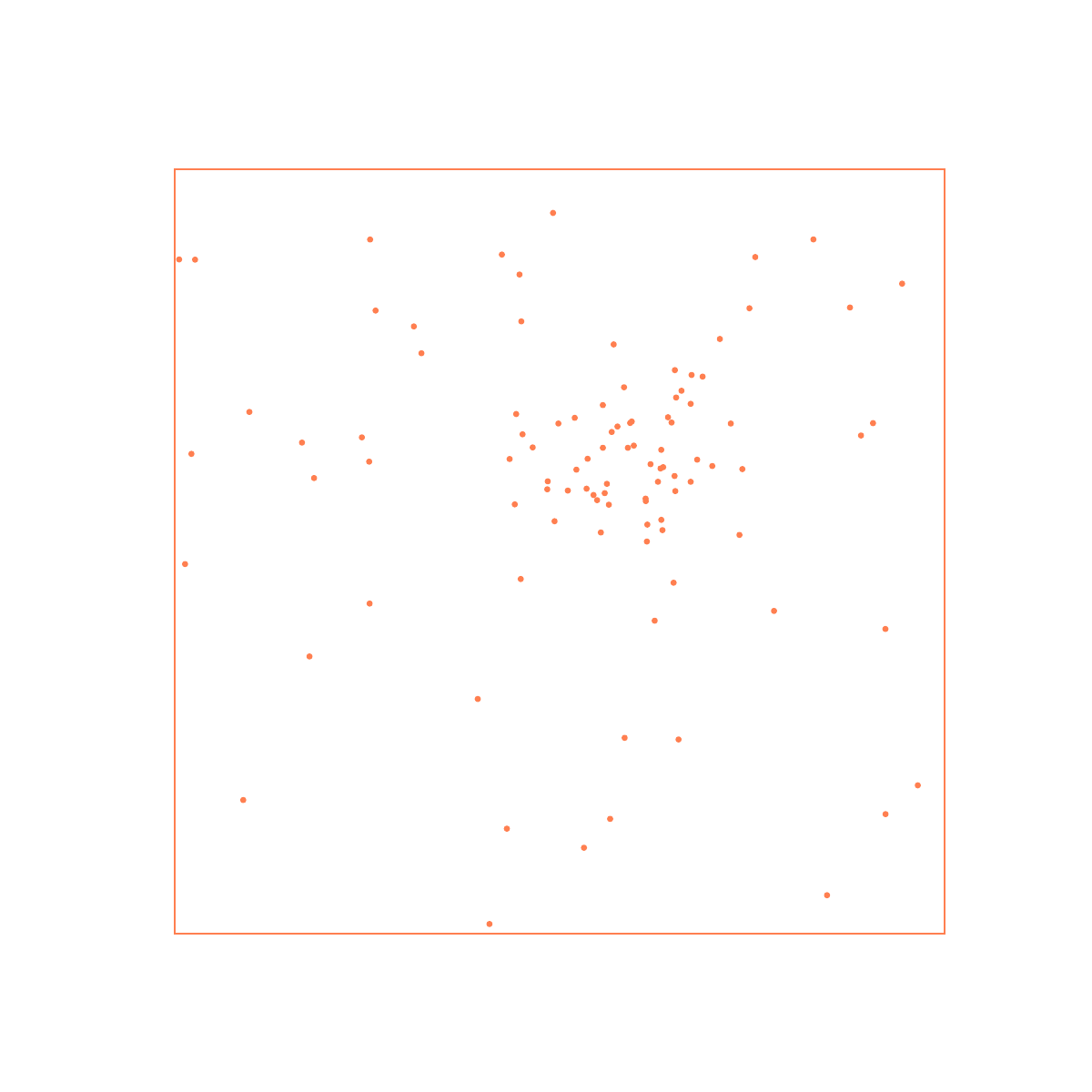}\label{figc}}
\subfigure[$\textit{d}_\textit{I}$, Implosion]{\includegraphics[width=.33\columnwidth]{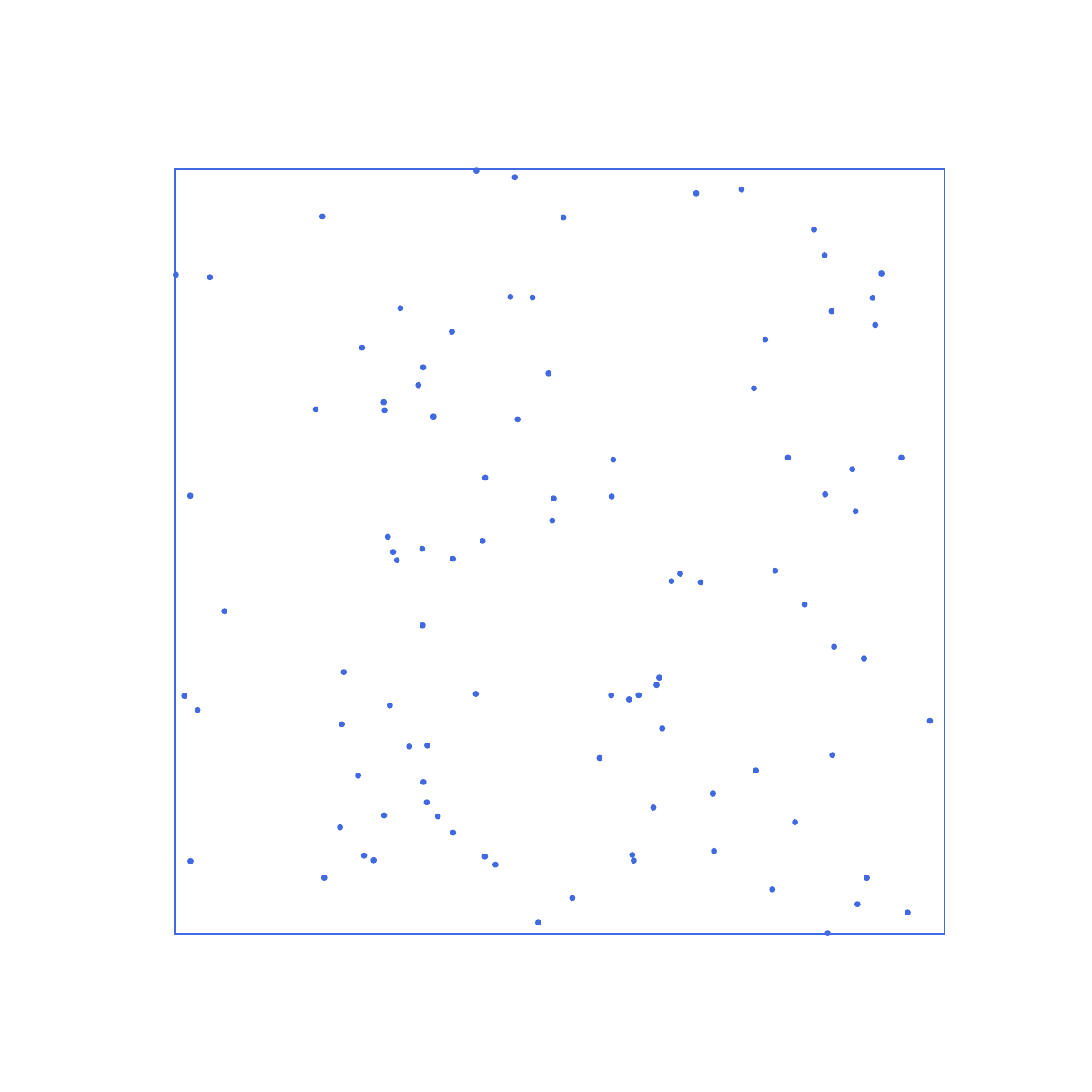}\label{figc}}
\subfigure[$\textit{d}_\textit{Eo}$, Explosion]{\includegraphics[width=.33\columnwidth]{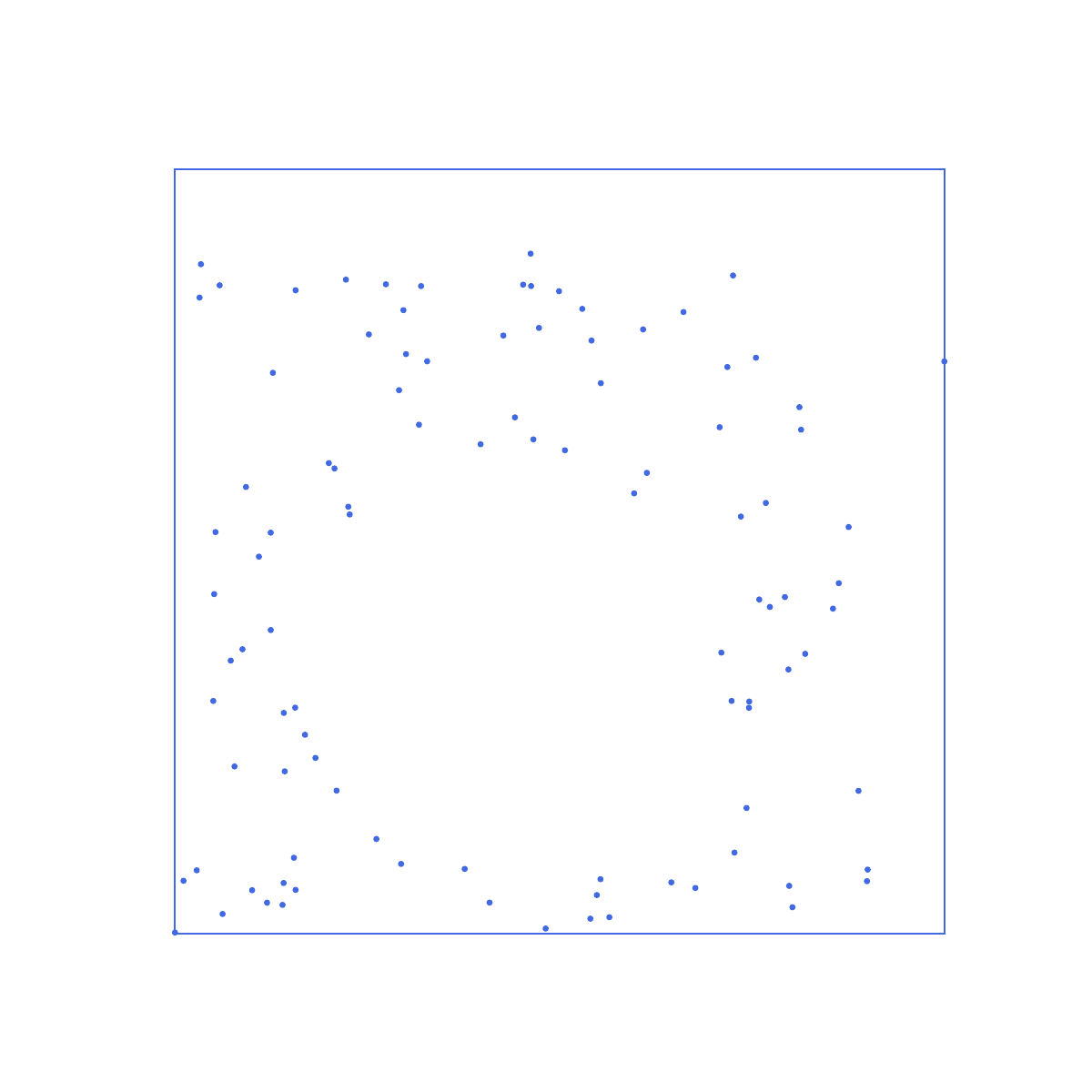}\label{figc}}
\subfigure[$\textit{d}_\textit{Ea}$, Expansion]{\includegraphics[width=.33\columnwidth]{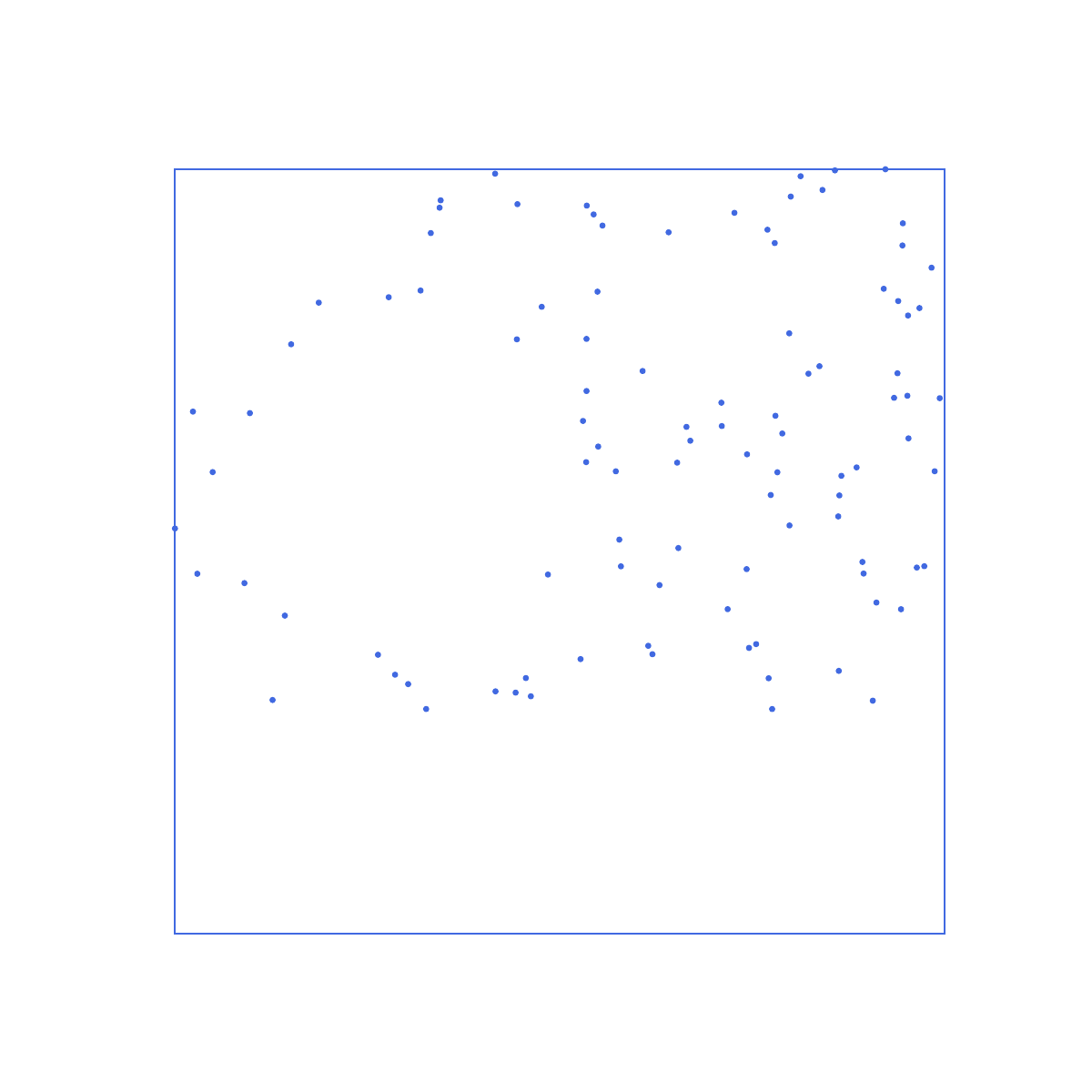}\label{figc}}
\caption{Visualization of VRP instances with various distribution patterns. We consider instances with distribution patterns (a)-(c) for training and all distribution patterns (a)-(f), as well as unseen benchmarking datasets (e.g., TSPLIB and CVRPLIB) for testing, following \cite{Bi2022}.}\label{fig4}
\end{figure*}

\begin{table*}[!t]
\small
\centering
\caption{In/cross-distribution generalization performance of POMO, AMDKD, OMNI-VRP and our proposed DS decoder}\label{tab_ds}
\begin{threeparttable}
\resizebox{2.1\columnwidth}{!}{
\begin{tabular}{lllccccccccccccccc}
\toprule
\multicolumn{1}{l}{}  & \multicolumn{2}{l|}{\multirow{3}{*}{Model, $d_i$}}    & \multicolumn{7}{c|}{$n_{\textit{te}}=50$}
& \multicolumn{7}{c}{$n_{\textit{te}}$=100}                                              & \multicolumn{1}{|c}{\multirow{3}{*}{{\tabincell{l}{Avg.\\gap}}}} \\
\multicolumn{3}{l}{}    & \multicolumn{6}{|c}{Optimality gap (\%)$\downarrow$} & \multicolumn{1}{c|}{\multirow{2}{*}{\tabincell{l}{Time\\(min)}}}    & \multicolumn{6}{c}{Optimality gap (\%)$\downarrow$}    & \multicolumn{1}{c|}{\multirow{2}{*}{\tabincell{l}{Time\\(min)}}}               \\
  &          &  & \multicolumn{1}{|c}{$\textit{d}_\textit{U}$} & $\textit{d}_\textit{C}$ & $\textit{d}_\textit{M}$ & $\textit{d}_\textit{I}$ & $\textit{d}_\textit{Eo}$ & \multicolumn{1}{c}{$\textit{d}_\textit{Ea}$} &  \multicolumn{1}{c|}{} & $\textit{d}_\textit{U}$ & $\textit{d}_\textit{C}$ & $\textit{d}_\textit{M}$ & $\textit{d}_\textit{I}$ & $\textit{d}_\textit{Eo}$ & \multicolumn{1}{c}{$\textit{d}_\textit{Ea}$} &  \multicolumn{1}{c|}{} &\multicolumn{1}{c}{}    \\
\midrule
\multirow{10}{*}{{\rotatebox{90}{TSP}}}  & \multicolumn{2}{l|}{POMO, $\textit{d}_\textit{U}$}   &0.04         &0.42         &0.21       &0.04           &0.03      &\multicolumn{1}{c|}{0.09}  & \multicolumn{1}{c|}{0.28}        &\textbf{0.17}           &1.97         &0.92         &0.17       & 0.20          &\multicolumn{1}{c|}{0.64}        & \multicolumn{1}{c|}{1.12}  &0.41     \\
                      & \multicolumn{2}{l|}{POMO, $\textit{d}_\textit{C}$}   &0.09         &0.07         &0.21       &0.08           &0.10      &\multicolumn{1}{c|}{0.23}   & \multicolumn{1}{c|}{0.28}       &0.41           &0.29         &0.83         &0.38       &0.33               &\multicolumn{1}{c|}{0.85}     &\multicolumn{1}{c|}{1.12}      &0.32     \\
                      & \multicolumn{2}{l|}{POMO, $\textit{d}_\textit{M}$}   &0.05         &0.17         &0.08       &0.08           &0.07      &\multicolumn{1}{c|}{0.12}     & \multicolumn{1}{c|}{0.28}     &0.77           &1.16         &0.34         &0.76       &0.57               &\multicolumn{1}{c|}{1.06}     & \multicolumn{1}{c|}{1.12}     &0.44     \\
                      & \multicolumn{2}{l|}{AMDKD, $\textit{d}_\textit{U}, \textit{d}_\textit{C}, \textit{d}_\textit{M}$} &0.05         &0.05         &0.09       &0.05           &0.03      &\multicolumn{1}{c|}{0.08}     &  \multicolumn{1}{c|}{0.28}    &0.34           & 0.35       &0.41           &0.34  &0.28 &\multicolumn{1}{c|}{0.65}    &   \multicolumn{1}{c|}{1.12}  &0.23     \\

                        & \multicolumn{2}{l|}{OMNI-VRP, $\textit{d}_\textit{U}, \textit{d}_\textit{G}$}   &0.84         &0.80         &1.82       &0.81           &0.63      &\multicolumn{1}{c|}{1.00}   &    \multicolumn{1}{c|}{0.28}   &1.27           &1.30         &2.24         &1.23       &0.99               &\multicolumn{1}{c|}{1.45}   &    \multicolumn{1}{c|}{1.12}    &1.20     \\

                    & \multicolumn{2}{l|}{OMNI-VRP$^*$, $\textit{d}_\textit{U}, \textit{d}_\textit{G}$}   &0.57         &0.52         &1.40       &0.54           &0.42      &\multicolumn{1}{c|}{0.76}   &  \multicolumn{1}{c|}{0.28}     &0.97           &1.04         &1.82         &0.94       &0.76               &\multicolumn{1}{c|}{1.23}   &   \multicolumn{1}{c|}{1.12}     &0.92     \\
                      
                      & \multicolumn{2}{l|}{DS, $\textit{d}_\textit{U}, \textit{d}_\textit{C}, \textit{d}_\textit{M}$, (ours)}      &\textbf{0.02}         &\textbf{0.03}         &\textbf{0.04}       &\textbf{0.02}           &\textbf{0.01}      &\multicolumn{1}{c|}{\textbf{0.03}}   & \multicolumn{1}{c|}{0.29}       &\textbf{0.17}           &\textbf{0.25}         &\textbf{0.23}         &\textbf{0.17}       &\textbf{0.13}              &\multicolumn{1}{c|}{\textbf{0.38}}   &      \multicolumn{1}{c|}{1.13}  &\textbf{0.12}     \\
                      \cmidrule{2-18}
                    & \multicolumn{1}{l}{\multirow{3}{*}{\tabincell{l}{DS with $n_{\mathcal{D}}=3$\\ decoders, (ours)}}} & \multicolumn{1}{l|}{min}    &0.01         &0.02         &0.03       &0.02           &0.01      &\multicolumn{1}{c|}{0.02}   &  \multicolumn{1}{c|}{}      &0.15           &0.21         &0.20         &0.15       &0.11              &\multicolumn{1}{c|}{0.31}   & \multicolumn{1}{c|}{}       &0.10  \\
                    &  & \multicolumn{1}{l|}{max}    &0.03         &0.05         &0.06       &0.03           &0.02      &\multicolumn{1}{c|}{0.05}   & \multicolumn{1}{c|}{0.32}        &0.21           &0.31         &0.27         &0.20       &0.16              &\multicolumn{1}{c|}{0.45}   &    \multicolumn{1}{c|}{1.21}    &0.15  \\
                     &  & \multicolumn{1}{l|}{mean}    &0.02         &0.03         &0.04       &0.02           &0.01      &\multicolumn{1}{c|}{0.03}     & \multicolumn{1}{c|}{}     &0.18           &0.26         &0.23         &0.18       &0.13              &\multicolumn{1}{c|}{0.38}    &   \multicolumn{1}{c|}{}    &0.13  \\
                                          \hline
\hline
\multirow{10}{*}{{\rotatebox{90}{CVRP}}} &\multicolumn{2}{l|}{POMO, $\textit{d}_\textit{U}$} &0.81         &1.53         &1.07       &0.81          &0.86           &\multicolumn{1}{c|}{1.00}   & \multicolumn{1}{c|}{0.33}      &\textbf{0.97}         &2.36       &1.32           &\textbf{0.98}      &1.10           &\multicolumn{1}{c|}{1.39}   &  \multicolumn{1}{c|}{1.19}      &1.18         \\
                      &\multicolumn{2}{l|}{POMO, $\textit{d}_\textit{C}$}   &1.17         &0.93         &1.07       &1.18           &1.13           &\multicolumn{1}{c|}{1.20}  &  \multicolumn{1}{c|}{0.33}      &1.45         &\textbf{1.30}       &1.22           &1.47      &1.48           &\multicolumn{1}{c|}{1.55}      &  \multicolumn{1}{c|}{1.19}    &1.26     \\
                      &\multicolumn{2}{l|}{POMO, $\textit{d}_\textit{M}$}   &1.22         &1.34         &0.85       &1.22          &1.16           &\multicolumn{1}{c|}{1.20}   &  \multicolumn{1}{c|}{0.33}     &1.90         &2.09       &0.97           &1.91      &1.83           &\multicolumn{1}{c|}{1.78}     & \multicolumn{1}{c|}{1.19}     &1.46     \\
                      &\multicolumn{2}{l|}{AMDKD, $\textit{d}_\textit{U}, \textit{d}_\textit{C}, \textit{d}_\textit{M}$}   
                      &0.86         &1.03         &0.90       &0.86              &0.90          &\multicolumn{1}{c|}{1.00}   &  \multicolumn{1}{c|}{0.33}     &1.08         &1.40       &1.01           &1.08      &1.28           &\multicolumn{1}{c|}{1.44} &       \multicolumn{1}{c|}{1.19}  &1.07 
                      \\
                    & \multicolumn{2}{l|}{OMNI-VRP, $\textit{d}_\textit{U}, \textit{d}_\textit{G}$}   &3.37         &2.78         &3.29       &3.36           &3.14      &\multicolumn{1}{c|}{2.82}      & \multicolumn{1}{c|}{0.33}    &2.12           &2.22         &2.24         &2.26       &2.28               &\multicolumn{1}{c|}{2.10}  &   \multicolumn{1}{c|}{1.19}      &2.67     \\

                    & \multicolumn{2}{l|}{OMNI-VRP$^*$, $\textit{d}_\textit{U}, \textit{d}_\textit{G}$}   &2.57         &2.01         &2.62       & 2.55          &2.41      &\multicolumn{1}{c|}{2.29}   &  \multicolumn{1}{c|}{0.33}      &1.93           &1.86         &1.79         &1.94      &1.94               &\multicolumn{1}{c|}{1.85}      &  \multicolumn{1}{c|}{1.19}   &2.15     \\
                      
                      &\multicolumn{2}{l|}{DS, $\textit{d}_\textit{U}, \textit{d}_\textit{C}, \textit{d}_\textit{M}$, (ours)}       &\textbf{0.72}         &\textbf{0.72}         &\textbf{0.65}       &\textbf{0.71}             &\textbf{0.70}           &\multicolumn{1}{c|}{\textbf{0.71}}    &  \multicolumn{1}{c|}{0.34}    &1.01         &1.39       &\textbf{0.73}           &1.03      &\textbf{1.08}           &\multicolumn{1}{c|}{\textbf{1.21}}  & \multicolumn{1}{c|}{1.21}        &\textbf{0.89}     \\
                                            \cmidrule{2-18}
                    & \multicolumn{1}{l}{\multirow{3}{*}{\tabincell{l}{DS with $n_{\mathcal{D}}=3$\\ decoders, (ours)}}} & \multicolumn{1}{l|}{min}    &0.60         &0.58         &0.52       &0.59           &0.58      &\multicolumn{1}{c|}{0.56}    & \multicolumn{1}{c|}{}       &0.82           &1.20         &0.58         &0.84       &0.87              &\multicolumn{1}{c|}{0.99}  &   \multicolumn{1}{c|}{}      &0.73  \\
                    &  & \multicolumn{1}{l|}{max}    &0.88         &0.87         &0.79       &0.87           &0.86      &\multicolumn{1}{c|}{0.85}  &   \multicolumn{1}{c|}{0.39}       &1.25           &1.67         &0.94         &1.28       &1.32              &\multicolumn{1}{c|}{1.43}   &   \multicolumn{1}{c|}{1.35}     &1.08  \\
                     &  & \multicolumn{1}{l|}{mean}    &0.74         &0.72         &0.65       &0.73           &0.72      &\multicolumn{1}{c|}{0.70}  &   \multicolumn{1}{c|}{}       &1.04           &1.43         &0.76         &1.06       &1.10              &\multicolumn{1}{c|}{1.21}    &  \multicolumn{1}{c|}{}     &0.91  \\
\bottomrule
\end{tabular}}
\begin{tablenotes}
\item[]Symbol $^*$ denotes the model applied with our proposed ESF during training and $\textit{d}_\textit{G}$ denotes the Gaussian distribution used in \cite{Zhou2023, Xin2022}.
\end{tablenotes}
\end{threeparttable}
\end{table*}

Secondly, we evaluate the size generalization performance of applying ESF during the training process. To this end, we select the SOTA model OMNI-VRP \cite{Zhou2023} trained on VRP instances with sizes $n_{\textit{tr}}$ ranging from $50$ to $200$, as the baseline for evaluation. Specifically, we train OMNI-VRP exactly according to its default settings, while applying the proposed ESF: 
$\operatorname{log}_{n_{b}}n_{\textit{tr}}$. Subsequently, we evaluate both the in-size ($n_{\textit{te}}\in\{100, 150, 200\}$) and the cross-size ($n_{\textit{te}}\in\{300, 500, 1000\}$) generalization performance of the pre-trained OMNI-VRP with applying ESF: $\operatorname{log}_{n_{b}}n_{\textit{te}}$, and compare it with that of the original pre-trained OMNI-VRP (i.e., without ESF). The results presented in Table~\ref{tab_log2} demonstrate that ESF further enhances the size generalization performance of OMNI-VRP on both TSPs and CVRPs. Specifically, regarding TSPs, there is an average relative improvement of $18.38\%$ and $13.8\%$ on in-size and cross-size generalization performance, respectively. For CVRPs, there is an average relative improvement of $4.51\%$ and $4.86\%$ on in-size and cross-size generalization performance, respectively. This observation that ESF enhances the model's generalization performance for CVRPs, as opposed to the results presented in Table~\ref{tab_log1}, can be attributed to the following reason: ESF enhances the model's capability to learn size-related aspects by dynamically adjusting the attention weight for CVRPs of different sizes during training, thus enhancing the model's ability to effectively generalize across varying sizes.

These experimental results demonstrate the effectiveness of ESF in facilitating size generalization, particularly noteworthy for its plug-and-play nature.

\subsection{Effectiveness Analysis of the DS Decoder}
To demonstrate the effectiveness of our proposed DS decoder, we apply it to POMO \cite{Kwon2020}, which is one of the most representative neural VRP solvers. Considering the SOTA cross-distribution generalization performance demonstrated by AMDKD \cite{Bi2022}, which trained POMO using VRPs of multiple distribution patterns, we follow their environmental settings exactly for both training and testing. Specifically, we employ Uniform ($\textit{d}_\textit{U}$), Cluster ($\textit{d}_\textit{C}$) and Uniform-Cluster mixed~($\textit{d}_\textit{M}$) distribution patterns for training, i.e., $n_{\mathcal{D}}=3$, while employing Implosion ($\textit{d}_\textit{I}$), Explosion ($\textit{d}_\textit{Eo}$) Expansion ($\textit{d}_\textit{Ea}$) and the three training distribution patterns for testing. In Figure~\ref{fig4}, we present a visualization of VRP instances with these distribution patterns. The detailed procedures for generating VRPs of these distribution patterns follow AMDKD exactly. To ensure a fair comparison, we align the number of training iterations with those set in AMDKD. The learning rate of Adam optimizer is set to 1e--4, consistent with AMDKD. During testing, we adopt the greedy rollout with $\times$8 augments \cite{Kwon2020} for our method and all baseline models. All experiments use test datasets comprising 10,000 randomly generated VRP instances per distribution pattern.

\begin{table*}[!t]
\small
\centering
\caption{Generalization performance of baseline models on TSPLIB}\label{TSPLIB}
\begin{threeparttable}
\begin{tabular}{l|l|cccccc}
\toprule
\multirow{3}{*}{Size} & \multirow{3}{*}{\#} & \multicolumn{6}{c}{Optimality gap (\%) $\downarrow$}   \\
 & & \multirow{2}{*}{OMNI-VRP}& \multirow{2}{*}{OMNI-VRP$^*$}& \multirow{2}{*}{AMDKD} & \multirow{2}{*}{AMDKD$^\dag$} & \multicolumn{2}{c}{Ours} \\

 & & & & & & DS & DS$^\dag$\\
 \midrule
 52-99 & 6 & 2.56& 1.80&  0.53& 0.51&  0.488$\pm$0.000& \textbf{0.485$\pm$0.004}\\
 100-150 & 17& 1.36& 1.12& 0.76& 0.74&  0.468$\pm$0.017 & \textbf{0.412$\pm$0.014} \\
 151-200& 6& 2.34& 2.22& 2.64& \textbf{2.19} & 3.526$\pm$0.129 & 2.480$\pm$0.089 \\
 \midrule
 All& 29& 1.81& 1.48& 1.10& 0.99& 1.104$\pm$0.030 & \textbf{0.856$\pm$0.026 }\\
 \bottomrule
\end{tabular}
\begin{tablenotes}
\item[] Symbol \# denotes the number of instances in the corresponding set and symbol $^\dag$ denotes the model with applying our proposed ESF during testing. 
\end{tablenotes}
\end{threeparttable}
\end{table*}

Table~\ref{tab_ds} presents the distribution generalization performance of the POMO model trained using VRPs of various distribution patterns ($\textit{d}_\textit{U}$, $\textit{d}_\textit{C}$, $\textit{d}_\textit{M}$), AMDKD, OMNI-VRP, and our proposed DS decoder-based method. As indicated by the average optimality gap (i.e., the rightmost column in Table~\ref{tab_ds}), the POMO model trained on a specific distribution pattern exhibits a limited generalization capability to other distribution patterns, resulting in suboptimal overall performance. For instance, POMO trained using TSPs of the Uniform distribution $\textit{d}_\textit{U}$ exhibits an optimality gap of $0.17\%$ on TSP-100 test instances of the Uniform distribution~$\textit{d}_\textit{U}$, whereas the average optimality gap increases to $0.68\%$ (relative drop: $\frac{0.68\%}{0.17\%}-1=300\%$) in test instances of the six distribution patterns. AMDKD effectively alleviates this issue by simultaneously training VRP instances with multiple distribution patterns. Nonetheless, this method leads to the model's deterioration, as observed in the comparative results of AMDKD and POMO trained on a single distribution pattern when solving VRPs from that specific distribution pattern. For instance, when solving TSP-100 test instances of the Uniform distribution $\textit{d}_\textit{U}$,  POMO trained using TSPs of the Uniform distribution $\textit{d}_\textit{U}$ exhibits an optimality gap of $0.17\%$, while AMDKD shows a worse optimality gap of $0.34\%$ (relative drop: $\frac{0.34\%}{0.17\%}-1=100\%$). Notably, the case of model deterioration in AMDKD constitutes $92\%$ of the total comparative results, indicating that the distribution space represented by the model struggles to adequately cover the space of the three training distribution patterns. In contrast, our method not only improves the performance of cross-distribution generalization (with  average relative improvements of $1-\frac{0.51\%}{0.79\%}=35.44\%$ and $1-\frac{0.51\%}{0.65\%}=22.31\%$ for POMO and AMDKD, respectively) but also effectively alleviates the model's deterioration, with the case of model deterioration constitutes $16.7\%$ of the total comparative results. Furthermore, the additional inference time incurred by our approach is minimal, amounting to only 0.29-0.28=0.01 minute. We deem that this minimal increase in computational resource overhead is acceptable in practical applications. Additionally, the implementation of AMDKD requires $n_{\mathcal{D}}=3$ pretrained teacher models for knowledge distillation, which undoubtedly increases the computational burden, while our method does not necessitate this additional step. For instance, considering TSP-100, training $n_{\mathcal{D}}=3$ teacher models for AMDKD on our computer takes approximately $n_{\mathcal{D}}\times5=15$ days.

Furthermore, given the stochasticity introduced by our method when selecting a single decoder through randomly sampling a few validation instances (see the context of (\ref{eq8})), we apply all the $n_{\mathcal{D}}=3$ decoders during testing for subsequent statistical analysis (see the context of (\ref{eq13})). The DS with $n_{\mathcal{D}}=3$ decoders setting leverages all 3 decoders, producing 3 solutions, which are then evaluated to return the max, mean, and min values of the optimality gap. The results, as presented in Table~\ref{tab_ds}, demonstrate that under the identical experimental setting, the performance of AMDKD is inferior to the worst~(max), average~(mean), and optimal~(min) cases of these decoders, except for CVRPs, where AMDKD outperforms only the worst case of our method by a slight margin. This result can be attributed to the DS decoder mitigates conflicts between instances of different distribution patterns, thereby improving the lower bound of the model performance. Moreover, the optimal case of the DS decoder consistently outperforms POMO trained on a certain distribution pattern when solving instances of that distribution pattern, i.e., no case of model deterioration in the overall comparative results. This result can be attributed to the employment of a shared encoder in our method, aiming to identify a uniform representation for instances of different distribution patterns. This approach improves the upper bound of model performance by considering distribution patterns beyond a specific distribution pattern as regular terms.

These experimental results demonstrate the effectiveness of the DS decoder in facilitating distribution generalization. Notably, our proposed DS decoder sidesteps the implementation of complex generalization algorithms, rendering it highly accessible for practical implementation.

\subsection{Effectiveness Analysis of ESF + DS}
To further evaluate the performance of our proposed ESF and DS decoder in real-world scenarios, we choose the widely recognized TSPLIB \cite{Reinelt1991} and CVRPLIB \cite{UCHOA2017} as the benchmarking datasets for testing. Considering the training sizes of AMDKD ($n_{\textit{tr}}\in\{50, 100\}$) and OMNI-VRP ($n_{\textit{tr}}\in[50, 200]$), we choose TSP instances from TSPLIB with node sizes ranging from 52 to 200 and CVRP instances from CVRPLIB with node sizes ranging from 32 to 101 as our test cases. To test the consistence of our model's performance, we repeat the experiments on TSPLIB for 10 times, each time with a different random seed. The results are reported as the mean $\pm$ margin of error, where the margin of error reflecting the 95\% confidence interval.

\begin{table}[!t]
\small
\centering
\caption{Generalization performance of baseline models on CVRPLIB (sets\{A, B, E, F, M, P\})}\label{CVRPLIB}
\begin{threeparttable}
\begin{tabular}{l|l|cccc}
\toprule
\multirow{2}{*}{Size} & \multirow{2}{*}{\#} & \multicolumn{4}{c}{Optimality gap (\%) $\downarrow$}   \\
 & & OMNI-VRP & OMNI-VRP$^*$ & AMDKD & {DS} \\
 \midrule
32-48& 30&  4.20& 3.12& \textbf{1.40}& 1.78\\
50-69& 35&  3.23& 2.70& 1.79& \textbf{1.51}\\
 70-101& 18&  6.99& 6.29& 4.86& \textbf{4.77}\\
  \midrule
  All& 75&  4.12& 3.35& \textbf{2.04}& 2.05\\
 \bottomrule
\end{tabular}
\end{threeparttable}
\end{table}

\begin{figure*}[!t]
\centering
\subfigure[$n_{\textit{tr}}=50\rightarrow n_{\textit{te}}=100, \log_{n_{\textit{tr}}}{n_{\textit{te}}}=1.177$]{\includegraphics[width=0.9\columnwidth]{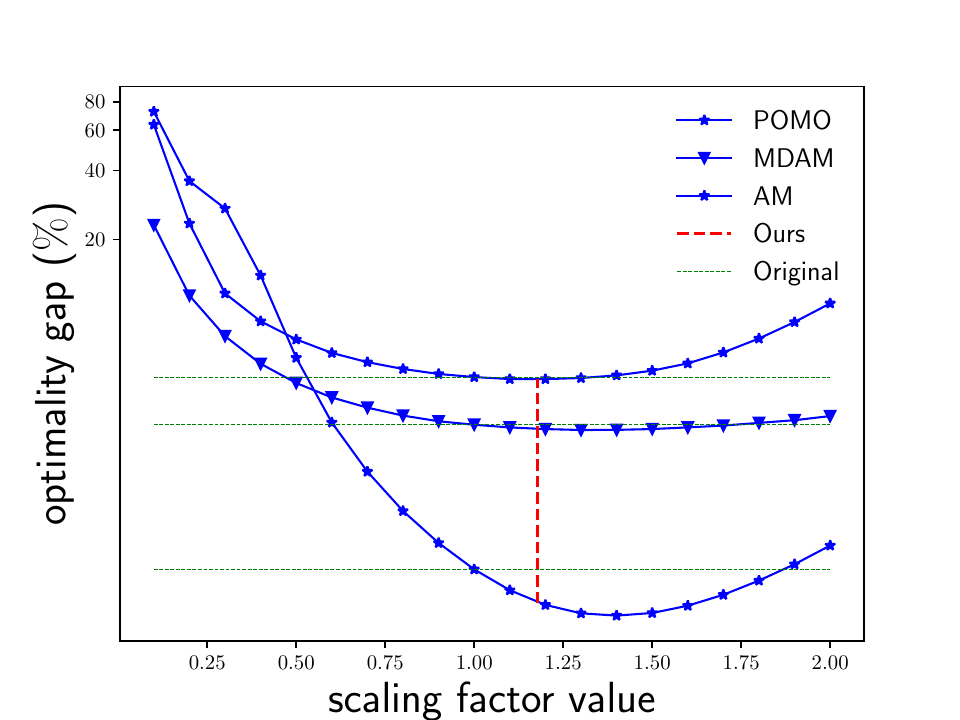}\label{fig5a}}
\subfigure[$n_{\textit{tr}}=100\rightarrow n_{\textit{te}}=50, \log_{n_{\textit{tr}}}{n_{\textit{te}}}=0.849$]{\includegraphics[width=0.9\columnwidth]{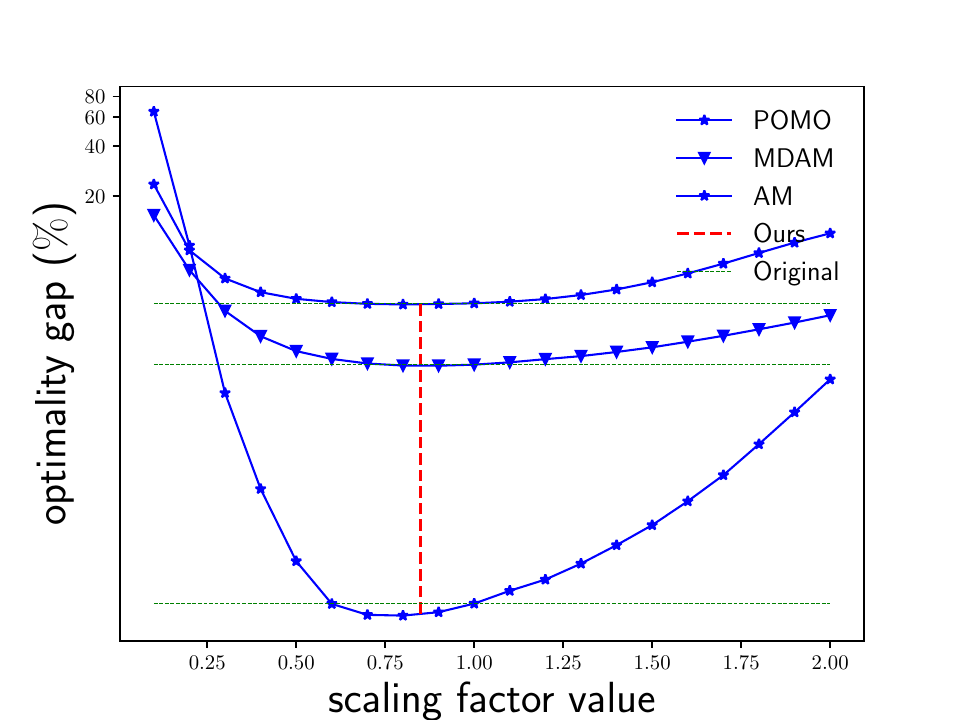}\label{fig5b}}
\subfigure[$n_{\textit{tr}}=50\rightarrow n_{\textit{te}}=200,  \log_{n_{\textit{tr}}}{n_{\textit{te}}}=1.354$]{\includegraphics[width=0.9\columnwidth]{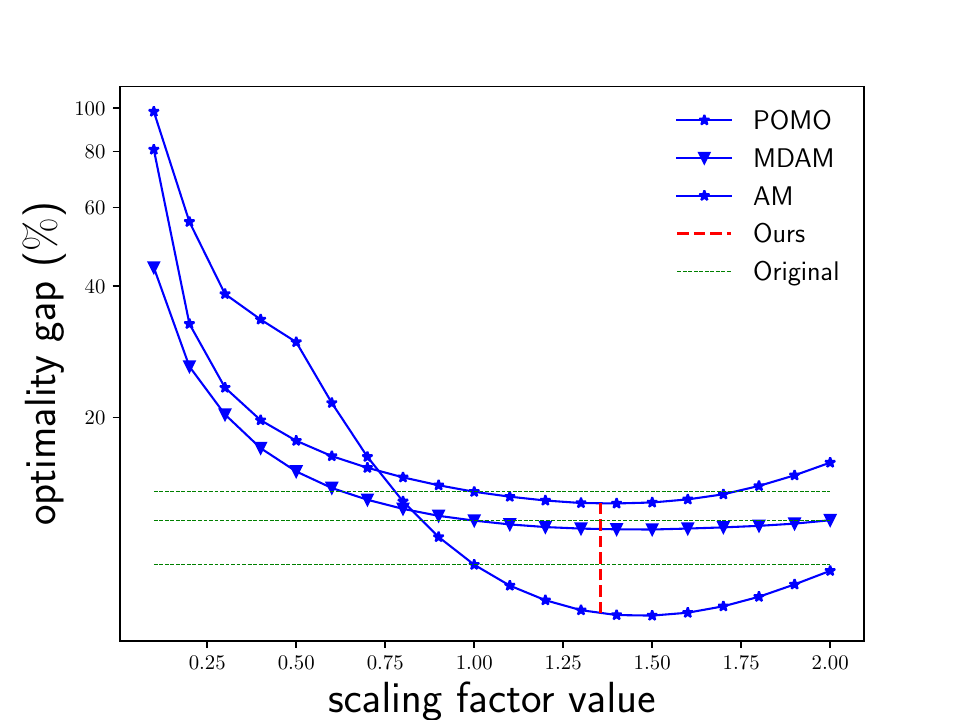}\label{fig5c}}
\subfigure[$n_{\textit{tr}}=100\rightarrow n_{\textit{te}}=200, \log_{n_{\textit{tr}}}{n_{\textit{te}}}=1.150$]{\includegraphics[width=0.9\columnwidth]{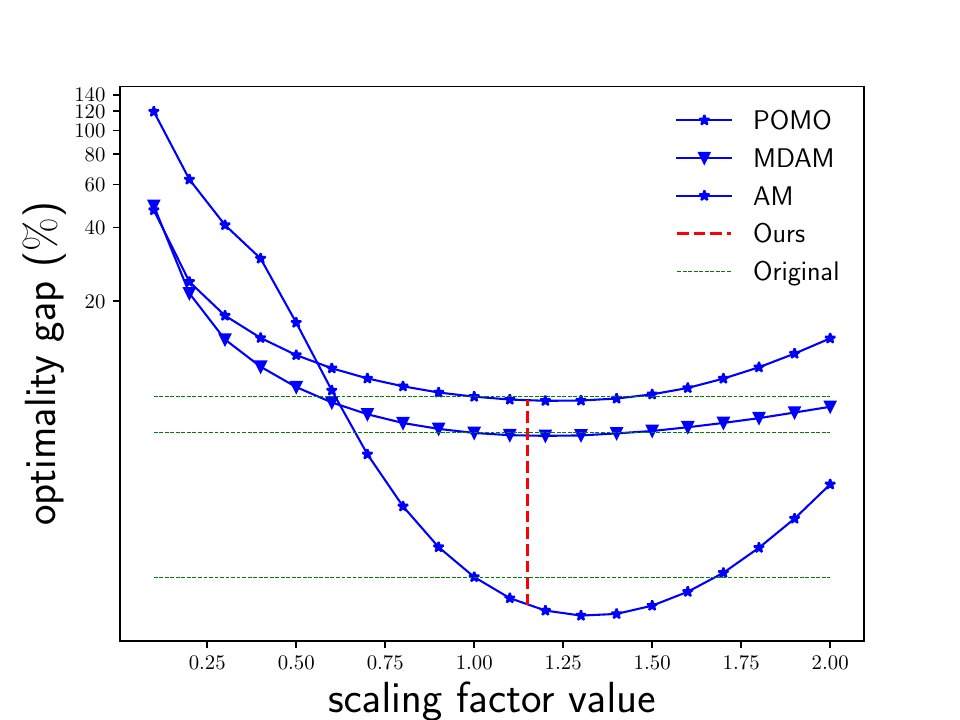}\label{fig5d}}
\caption{Generalization performance of baseline models with different scaling factors applied on TSPs.}\label{fig5}
\end{figure*}

\begin{figure*}[!t]
\centering
\subfigure[TSP-50]{\includegraphics[width=\columnwidth]{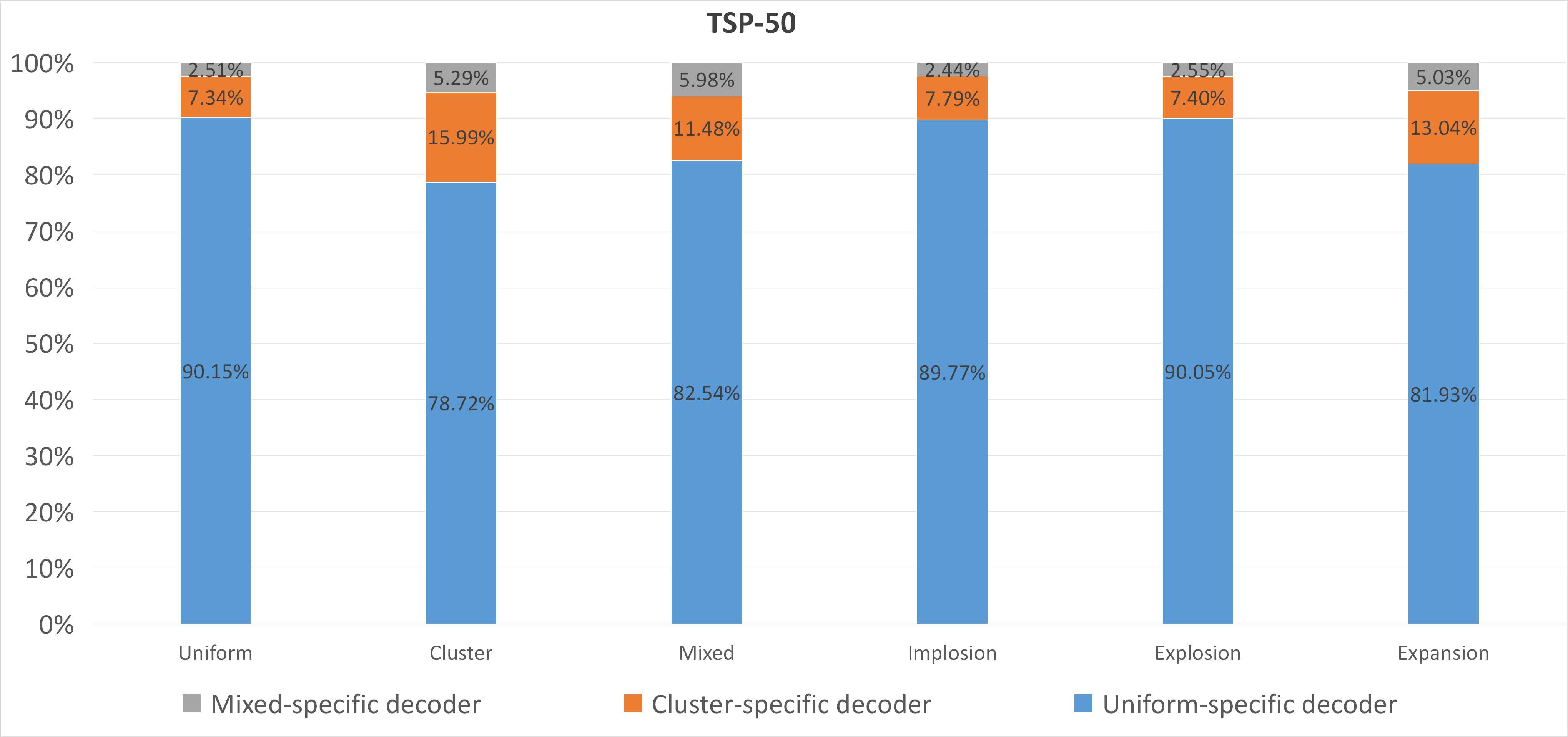}\label{fig6a}}
\subfigure[TSP-100]{\includegraphics[width=\columnwidth]{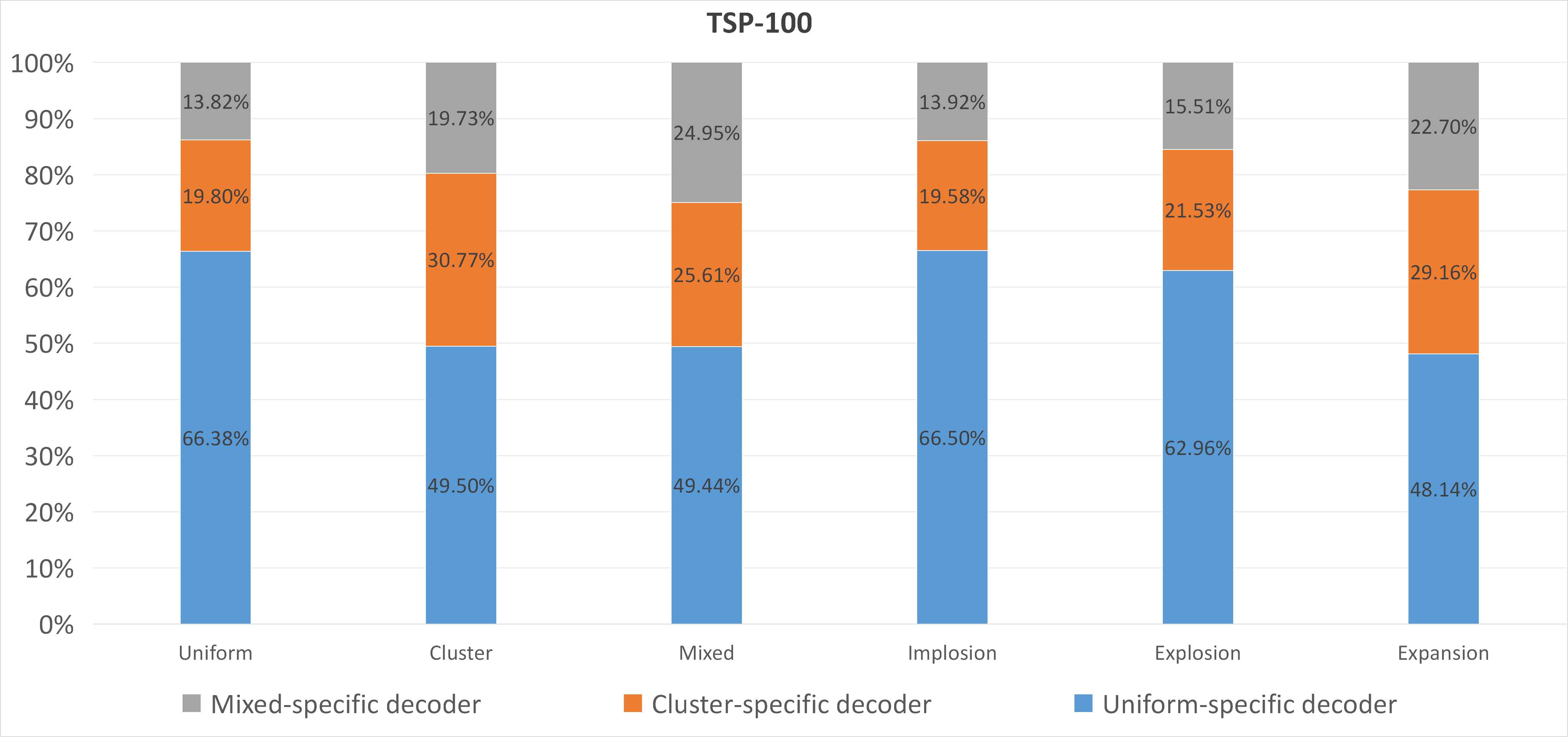}\label{fig6b}}
\subfigure[CVRP-50]{\includegraphics[width=\columnwidth]{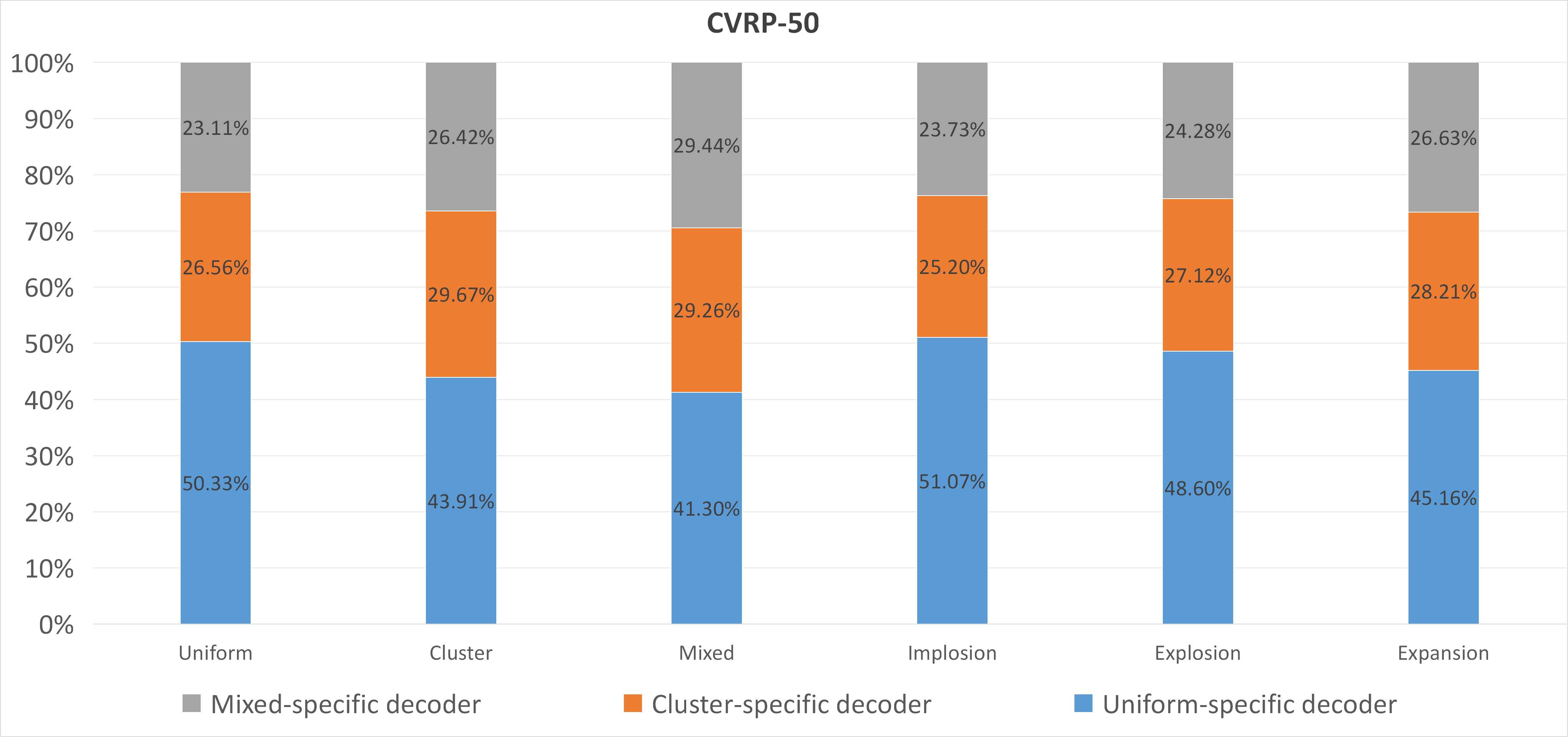}\label{fig6c}}
\subfigure[CVRP-100]{\includegraphics[width=\columnwidth]{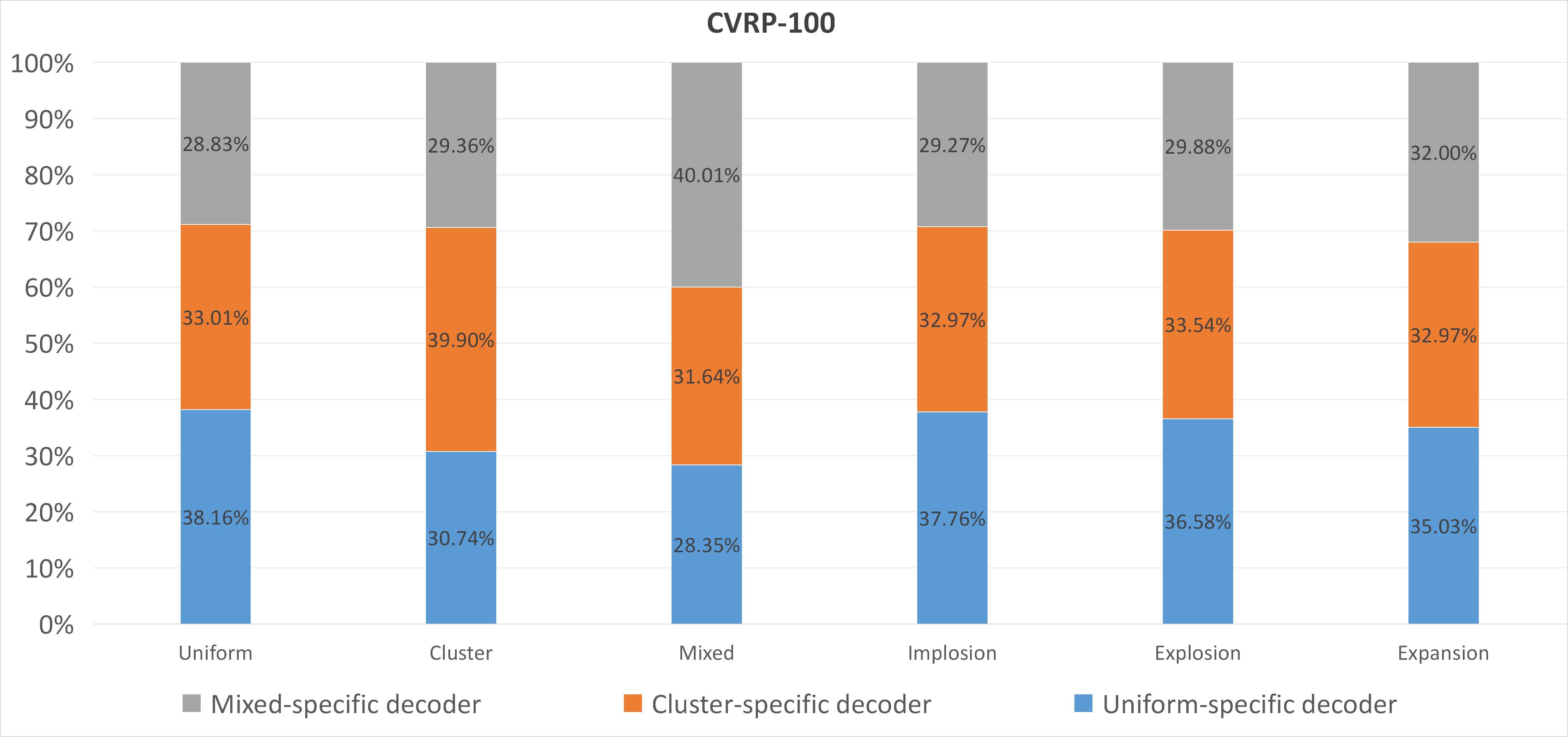}\label{fig6d}}

\caption{Proportion diagrams of the three distribution-specific decoders occupying the optimal choice on the test instances of the six distribution patterns.}\label{fig6}
\end{figure*}

The results presented in Tables~\ref{TSPLIB} (TSPLIB) and \ref{CVRPLIB} (CVRPLIB) indicate that our proposed DS decoder yields results comparable to the well-designed AMDKD and outperforms OMNI-VRP by simply imposing a model architecture improvement method. 
Moreover, as shown in Table~\ref{TSPLIB}, the low margin of error indicates that our model exhibits statistically robust performance. Remarkably, our method produces solutions identical to the optimal solutions provided by TSPLIB in the following three instances: ``pr76", ``pr124", and ``rd100". To the best of our knowledge, our approach represents the first neural construction model capable of achieving optimal solutions in TSPLIB without requiring iterative update processes. Furthermore, as shown in Tables~\ref{tab_ds} and \ref{TSPLIB}, the proposed ESF consistently improves the performance of baseline models and the DS decoder in all test instances with different distribution patterns and shows no preference toward any specific distribution patterns. These experimental results further illustrate the effectiveness of enhancing model generalization ability through our proposed components.

\subsection{Ablation Studies}
To assess the effectiveness of the proposed ESF and DS decoder, we further conduct two sets of experiments as follows.

Firstly, we conduct ablation studies on TSPs to assess the sensitivity of ESF. Specifically, we evaluate the size generalization performance of baseline models (AM, POMO, MDAM) using different scaling factors with values ranging from 0.1 to 2 during testing. We present the results in Figure~\ref{fig5}, where the model using the scaling factor with a value of $1$ represents the model's original performance (i.e., the line in cyan). The results suggest that using the scaling factor with values exceeding 1 and below 1 effectively reduce the optimality gap when scaling up (see Figures~\ref{fig5a}, \ref{fig5c} and \ref{fig5d}) and scaling down (see Figure~\ref{fig5b}) TSP instances, respectively, as shown in the area below the line in cyan for each model. This showcases the feasibility of using scaling factors to enhance the model's size generalization performance. Nonetheless, the effectiveness of the scaling factor is constrained within a specific range, beyond which the model's performance undergoes a considerable degradation. However, the result of our proposed ESF (i.e., the red line) consistently falls within the appropriate range, demonstrating the effectiveness of its value setting. It is worth noting that the proposed ESF's value may not always be the optimal value of the scaling factor. This is because a fixed scaling factor cannot account for variations introduced by the model using different inference methods, such as single rollout and multiple rollouts employed by AM and POMO, respectively. Model-specific scaling factors inevitably incorporate features unique to different models, resulting in a significant increase in computational cost. This contradicts with our pursuit of cost reduction in this work.

Secondly, we assess the efficacy of the DS decoder in learning from varying distribution patterns. Specifically, we employ the three DS decoders trained on the distribution patterns $\textit{d}_\textit{U}$, $\textit{d}_\textit{C}$, and $\textit{d}_\textit{M}$, respectively, to solve 10,000 VRPs for each of the six distribution patterns ($\textit{d}_\textit{U}$, $\textit{d}_\textit{C}$, $\textit{d}_\textit{M}$, $\textit{d}_\textit{I}$, $\textit{d}_\textit{Eo}$, and $\textit{d}_\textit{Ea}$). Then, we compute the percentage of the optimal (i.e., the best solution among the three DS decoders) choice made by each DS decoder for each distribution pattern. The results, as presented in Figure~\ref{fig6}, suggest that while the Uniform distribution-specific decoder dominates in most optimal choice cases, the other two DS decoders exhibit higher optimal choice percentages under the training distribution pattern compared to other distribution patterns. For instance, when solving TSP-50 instances, the Cluster distribution-specific decoder shows a optimal choice percentage of $15.99\%$ on the test instances of the Cluster distribution, while exhibiting optimal choice percentages of $\{7.34\%, 11.48\%, 7.79\%, 7.40\%, 13.04\%\}$ on the remaining five distribution patterns, respectively. This finding demonstrates that our DS decoder effectively learns distribution-dependent features. Furthermore, the dominance of the Uniform distribution-specific decoder underscores the significance of the Uniform distribution, explaining its extensive adoption in the training of almost all neural VRP solvers \cite{Kool2019}. Nevertheless, as the challenge in problem-solving increases\footnote{It is generally acknowledged that solving VRPs of larger sizes is more challenging than those of smaller sizes, and solving CVRPs is more challenging than TSPs at the same scale.}, the Uniform distribution hypothesis space faces growing challenges in covering other distribution patterns, as evidenced by the declining optimal choice percentage of the Uniform distribution-specific decoder (from Figures~\ref{fig6a} to \ref{fig6d}). This finding highlights the inherent constraints in relying solely on a single distribution, or even one formed from a mix of multiple distribution patterns, such as the Uniform-Cluster mixed distribution $\textit{d}_\textit{M}$. Consequently, employing our DS decoder to explicitly model multiple distribution patterns is anticipated to demonstrate efficacy in solving more intricate VRPs going forward.

Furthermore, we assess the robustness of the DS decoder when
trained on different distribution patterns. Specifically, we utilize Uniform-rectangle ($\textit{d}_\textit{Ur}$), Diagonal ($\textit{d}_\textit{D}$), and Gaussian ($\textit{d}_\textit{G}$) distribution patterns from \cite{Xin2022} for training and compare the performance with those using the DS decoder trained on Uniform ($\textit{d}_\textit{U}$), Cluster ($\textit{d}_\textit{C}$), and Uniform-Cluster mixed~($\textit{d}_\textit{M}$) distribution patterns. As shown in Table~\ref{ESF_Abla}, the performance differences among the models trained on these diverse distributions are minimal, demonstrating that the DS decoder maintains a high level of robustness when trained on diverse distribution patterns.

\begin{table}[!t]
\small
\centering
\caption{Results of the DS decoder trained on different distribution patterns}\label{ESF_Abla}
\begin{threeparttable}
\resizebox{1\columnwidth}{!}{
\begin{tabular}{ll|cccccc}
\toprule
\multicolumn{2}{l|}{\multirow{2}{*}{Model, $d_i$}} & \multicolumn{6}{c}{TSP-50, optimality gap (\%)}   \\
& &\multicolumn{1}{|c}{$\textit{d}_\textit{U}$} & $\textit{d}_\textit{C}$ & $\textit{d}_\textit{M}$ & $\textit{d}_\textit{I}$ & $\textit{d}_\textit{Eo}$ & \multicolumn{1}{c}{$\textit{d}_\textit{Ea}$}\\
 \midrule
 \multicolumn{2}{l|}{DS, $\textit{d}_\textit{U}, \textit{d}_\textit{C}, \textit{d}_\textit{M}$} & 0.02&  0.03& 0.04& 0.02 & 0.01 & 0.03 \\
 \multicolumn{1}{l}{\multirow{3}{*}{\tabincell{l}{DS with $n_{\mathcal{D}}=3$ \\ decoders}}} & min & 0.01 & 0.02 & 0.03 & 0.02 & 0.01 & 0.02 \\
 & max & 0.03 & 0.05 & 0.06 & 0.03 & 0.02 & 0.05\\

 & mean & 0.02 & 0.03 & 0.04 & 0.02 & 0.01 & 0.03\\
  \midrule
   \multicolumn{2}{l|}{DS, $\textit{d}_\textit{Ur}, \textit{d}_\textit{D}, \textit{d}_\textit{G}$} & 0.02&  0.08& 0.07& 0.02 & 0.01 & 0.09 \\
 \multicolumn{1}{l}{\multirow{3}{*}{\tabincell{l}{DS with $n_{\mathcal{D}}=3$ \\ decoders}}} & min & 0.01 & 0.05 & 0.05 & 0.02 & 0.01 & 0.05 \\
 & max & 0.03 & 0.10 & 0.10 & 0.03 & 0.02 & 0.11\\

 & mean & 0.02 & 0.07 & 0.07 & 0.02 & 0.02 & 0.08\\
 \bottomrule
\end{tabular}}
\end{threeparttable}
\end{table}

\section{Conclusion}
This paper explores the generalization of neural VRP solvers from the lens of model architecture, and proposes two generic components to enhance the size and distribution generalization, respectively. Extensive experimental results demonstrate the effectiveness of the proposed components and demonstrate the feasibility of enhancing generalization through lightweight model architecture improvement methods. We believe that our study offers a novel perspective within the neural CO community.

While both components are generic, the observed improvement is limited in few experimental settings. Going forward, we plan to 1) develop a learnable scaling factor to obtain further performance gain; 2) develop an additional NN capable of outputting VRPs with notably varying distribution patterns, then integrate it with our DS decoder for covering a broader distribution hypothesis space; and 3) synthesize these two methods to achieve an all-encompassing generalization for practical VRPs.

\bibliographystyle{elsarticle-num}
\bibliography{MyBib1}
\end{document}